\documentclass[sigconf,nonacm]{acmart}
\pdfoutput=1
 
\usepackage{amsmath}
\usepackage{xcolor}
\usepackage{xspace} 
\usepackage{paralist} 

\usepackage{booktabs}  
\usepackage{csvsimple} 

\usepackage{caption} 
\usepackage{subcaption} 

\usepackage{bm}

\usepackage{tikz}
\usepackage{pgf}
\usepackage{pgfplots}
\usepackage{multirow}

\usepackage{circledsteps}
\tikzset{/csteps/outer color=blue}

\usetikzlibrary{positioning}

\usepackage[normalem]{ulem}

\usepackage{xpatch}
\usepackage{textcase}
\makeatletter
\xpatchcmd{\@sect}{\uppercase}{\MakeTextUppercase}{}{}
\xpatchcmd{\@sect}{\uppercase}{\MakeTextUppercase}{}{}
\makeatother

\usepackage[explicit]{titlesec}


\titleformat{\paragraph}[runin]
{\normalfont\bfseries}
{}
{0pt}
{#1}

\titlespacing{\paragraph}
{0pt} 
{1ex plus .1ex minus .2ex}
{1ex} 


\usepackage[appendix=append,bibliography=common]{apxproof} 

\theoremstyle{plain}
\newtheoremrep{theorem}{Theorem}[section]
\newtheoremrep{lemma}[theorem]{Lemma}
\newtheoremrep{proposition}[theorem]{Proposition}
\newtheoremrep{claim}[theorem]{Claim}
\newtheoremrep{corollary}[theorem]{Corollary}
\newtheoremrep{observation}[theorem]{Observation}
\newtheoremrep{remark}[theorem]{Remark}



\newcommand{\natnum}{\mathbb{N}}
\newcommand{\nats}{\mathbb{N}}
\newcommand{\natsinf}{\mathbb{N}_{\infty}}
\newcommand{\reals}{\mathbb{R}}

\newcommand{\multileft}{\ensuremath{\{\!\!\{}}
\newcommand{\multiright}{\ensuremath{\}\!\!\}}}
\newcommand{\mset}[1]{\ensuremath{\multileft #1 \multiright}}
\DeclareMathOperator{\supp}{\textit{supp}}
\newcommand{\mrestr}[2]{\ensuremath{{#1}|_{\leq #2}}}
\newcommand{\restr}[2]{\ensuremath{{#1}|_{#2}}}

\DeclareMathOperator{\inc}{\mathsf{in}} 
\DeclareMathOperator{\incidence}{\mathsf{incidence}} 


\newcommand{\gnn}{GNN\xspace} 
\newcommand{\gnns}{GNNs\xspace} 

\newcommand{\agg}{\textsc{Agg}} 
\newcommand{\comb}{\textsc{Comb}} 

\newcommand{\size}[1]{\ensuremath{|#1|}}
\newcommand{\seq}[1]{\ensuremath{\overline{#1}}}
\renewcommand{\L}{\ensuremath{\seq L}}

\newcommand{\cls}[1]{\ensuremath{\mathcal{#1}}}
\newcommand{\ctrans}{\ensuremath{\cls C}} 

\newcommand{\prob}{\ensuremath{\mathcal{P}}}
\newcommand{\topol}{\ensuremath{\mathcal{S}}}

\DeclareMathOperator{\loss}{\textsc{Loss}}
\newcommand{\train}{\ensuremath{T}}

\DeclareMathOperator{\colr}{\mathsf{cr}} 

\newcommand{\indist}[1]{\ensuremath{\sim_{#1}}}
\newcommand{\class}[2]{\ensuremath{[#1]_{#2}}}

\addtolength{\floatsep}{-4mm}       
\addtolength{\textfloatsep}{-4mm}   
\addtolength{\dbltextfloatsep}{-4mm}

\begin{document}

\title{Learning Graph Neural Networks using Exact Compression}

\author{Jeroen Bollen}
\orcid{0000-0002-8881-5241}
\email{jeroen.bollen@uhasselt.be}
\affiliation{%
  \institution{UHasselt, Data Science Institute}
  \country{Belgium}
}

\author{Jasper Steegmans}
\orcid{0000-0003-2087-9430}
\email{jasper.steegmans@uhasselt.be}
\affiliation{%
  \institution{UHasselt, Data Science Institute}
  \country{Belgium}
}

\author{Jan Van den Bussche}
\email{jan.vandenbussche@uhasselt.be}
 \orcid{0000-0003-0072-3252}
\affiliation{%
  \institution{UHasselt, Data Science Institute}
  \country{Belgium}
}

\author{Stijn Vansummeren}
\email{stijn.vansummeren@uhasselt.be}
\orcid{0000-0001-7793-9049}
\affiliation{%
  \institution{UHasselt, Data Science Institute}
  \country{Belgium}
}

\begin{abstract}
  Graph Neural Networks (\gnns) are a form of deep learning that enable a wide
  range of machine learning applications on graph-structured data. The learning
  of \gnns, however, is known to pose challenges for memory-constrained devices
  such as GPUs. In this paper, we study \emph{exact compression} as a way to
  reduce the memory requirements of learning \gnns on large graphs. In
  particular, we adopt a formal approach to compression and propose a
  methodology that transforms \gnn learning problems into provably equivalent
  compressed \gnn learning problems.  In a
  preliminary experimental evaluation, we give insights into the compression
  ratios that can be obtained on real-world graphs and apply our methodology to
  an existing \gnn benchmark.
\end{abstract}

\maketitle

\section{Introduction}
\label{sec:intro}
Whereas Machine Learning (ML) has traditionally been most successful in
analyzing traditional unstructured data such as text or images, ML over
structured data such as graphs has become an active area of research in
the past decade.  In particular, Graph Neural Networks (\gnns for short) are a
form of deep learning architectures that enable a wide range of ML applications
on graph-structured data, such as molecule classification, knowledge graph
completion, and web-scale recommendations~\cite{battagliaRelationalInductiveBiases2018,gilmerNeuralMessagePassing2017,hamiltonInductiveRepresentationLearning2017,yingGraphConvolutionalNeural2018,hamiltonGraphRepresentationLearning2020}. 
At their core, \gnns allow to embed graph nodes into vector space. Crucially, the obtained vectors can capture graph structure, 
which is essential for the ML
applications already cited.

While \gnns are hence an attractive mechanism for ML on
graphs, 
learning \gnns is known to be resource-demanding which limits their
scalability~\cite{wuComprehensiveSurveyGraph2021,zhuSpikingGraphConvolutional2022}. In
particular, for large graphs it becomes difficult to encode all the required
information into the limited memory of hardware accelerators like GPUs. For this
reason, scalable methods for learning \gnns on large graphs are an active
subject of research (e.g.,
\cite{wangNeutronStarDistributedGNN2022,zhengByteGNNEfficientGraph2022,yuanDistributedLearningFully2022,linCharacterizingUnderstandingDistributed2022,liaoSCARAScalableGraph2022,pengSancusStalenessawareCommunicationavoiding2022,
  generaleScalingRGCNTraining2022,salhaDegeneracyFrameworkScalable2019,dengGraphZoomMultilevelSpectral2020a,huangAdaptiveSamplingFast2018,chenFastGCNFastLearning2018,hamiltonInductiveRepresentationLearning2017}). Broadly
speaking, we can identify three different principles for obtaining scalability
in the literature: (1) distributing computation across multiple machines or
GPUs~\cite{wangNeutronStarDistributedGNN2022,zhengByteGNNEfficientGraph2022,yuanDistributedLearningFully2022,linCharacterizingUnderstandingDistributed2022,liaoSCARAScalableGraph2022,pengSancusStalenessawareCommunicationavoiding2022,
  generaleScalingRGCNTraining2022}; (2) learning on a sample of the input graph
instead of the entire
graph~\cite{,huangAdaptiveSamplingFast2018,chenFastGCNFastLearning2018,hamiltonInductiveRepresentationLearning2017};
and (3)
compression~\cite{generaleScalingRGCNTraining2022,salhaDegeneracyFrameworkScalable2019,dengGraphZoomMultilevelSpectral2020a,liangMILEMultiLevelFramework2021}.
Compression-based approaches
limit the memory requirements of learning \gnns on large graphs by reducing the
input graph into a smaller graph and then learn on this smaller, reduced graph
instead. In this paper, we are concerned with compression.

Compression methods are based on collapsing multiple input nodes into a single
reduced node in the compressed graph. Methods vary, however, in how they
collapse nodes. For example, Deng et
al.~\cite{dengGraphZoomMultilevelSpectral2020a} use spectral analysis for this
purpose; Liang et al.~\cite{liangMILEMultiLevelFramework2021} use variants of
multi-level graph partitioning; and Generale et
al.~\cite{generaleScalingRGCNTraining2022}, who specifically consider knowledge
graphs, use general heuristics (such as two nodes having equal set of
attributes) or bisimulation. While these methods give intuitive reasons to argue that the
structure of the obtained compressed graph should be similar to that of the
original graph, no formal guarantee is ever given that learning on the
compressed graph is in any way equivalent to learning on the original
graph. Furthermore, the methods are usually devised and tested for a specific
\gnn architecture (such as Graph Convolutional Networks, GCN). It is therefore
unclear how they fare on other \gnn architectures. Inherently, these methods are
hence heuristics. At best the compressed graphs that they generate
\emph{approximate} the original graph structure, and it is difficult to predict
for which \gnn architectures this approximation is good enough, and for which
architectures it poses a problem.

Towards a more principled study of learning \gnns on compressed
graphs, we propose to take a formal approach and study
\emph{exact compression} instead. We make the following
contributions.

(1.)  We formally define when two learning problems involving graph neural
  networks are equivalent. Based on this definition, our goal is to transform a
  given problem into a smaller, equivalent problem based on
  compression. (Section~\ref{sec:prelims}.)

(2.)  We develop a compression methodology that is guaranteed to always yield an
  equivalent learning problem and that is applicable to a wide class of \gnn
  architectures known as \emph{aggregate-combine
\gnns}~\cite{groheLogicGraphNeural2021,barceloExpressivePowerGraph2020,hamiltonGraphRepresentationLearning2020}. This
  class includes all Graph Convolutional
  Networks~\cite{hamiltonGraphRepresentationLearning2020}. Our methodology is
  based on recent insights into the expressiveness of aggregate-combine \gnns
  ~\cite{morrisWeisfeilerLemanGo2019,xuHowPowerfulAre2019}. These results imply
  that if the local neighborhoods of two nodes $v, w$ in input graph $G$ are
  equal, then any \gnn will treat $v$ and $w$ identically. We may intuitively
  exploit this property for compression: if $v$ and $w$ are treated identically
  there is no need for them both to be present during learning; having one of
  them suffices. We fully develop this intuition in
  Section~\ref{sec:methodology}, where we also consider a more relaxed notion of
  ``local neighborhood''  that is applicable
  only to specific kinds of aggregate-combine \gnns.

  (3.) We empirically evaluate the effectiveness of our methodology in
  Section~\ref{sec:evaluation}. In particular, we give insights into the
  compression ratios that can be obtained on real-world graphs. While we find
  that these ratios are diverse, from compressing extremely well to compressing
  only marginally, a preliminary experiment on an existing \gnn benchmark shows positive impact on learning efficiency  even with marginal compression.

  We start with preliminaries in Section~\ref{sec:prelims} and conclude in
  Section~\ref{sec:conclusion}.  Proofs of formal statements may be found in the
  Appendix.

\section{Preliminaries}
\label{sec:prelims}
\paragraph{Background.} We denote by $\reals$ the set of real numbers, by $\nats$ the set of natural numbers, and by $\natsinf$ the set $\natnum \cup \{\infty\}$ of natural numbers extended with infinity.
We will use double curly braces $\mset{\dots}$ to denote multisets and multiset
comprehension. Formally, we view a multiset over a domain of elements $S$ as a
function $M \colon S \to \nats$ that associates a multiplicity $M(x)$ to each
element $x \in S$. As such, in the multiset
$M = \multileft a, a, b \multiright$, we have that $M(a) = 2$ and $M(b) = 1$.
If $M(x) = 0$ then $x$ is not present in $M$.  We 
denote by $\supp(M)$ the set of all elements present in $M$,
$\supp(M) := \{ x \in S \mid M(x) > 0 \}$. 
Note that if
every element has multiplicity at most one, then $M$ is a set. If $M$ is a multiset
and $c \in \natsinf$ then we denote by $\mrestr{M}{c}$ the multiset obtained
from $M$ by restricting the multiplicity of all elements to be at most $c$,
i.e., $\mrestr{M}{c}(x) = \min(M(x), c)$, for all elements $x$. Note in
particular that $\mrestr{M}{+\infty} = M$ and that $\mrestr{M}{1}$ converts $M$
into a set.

\paragraph{Graphs.} We work with directed node-colored multigraphs.  Formally,
our graphs are hence triples $G = (V, E, g)$ where $V$ is a finite set of nodes;
$E$ is a \emph{multiset} of edges over $V \times V$; and $g$ is a function,
called the \emph{coloring} of $G$, that maps every node $v \in V$ to a
\emph{color} $g(v)$. (The term ``color'' is just an intuitive way to specify
that $g$ has some unspecified range.)  If $Y$ is the co-domain of $g$, i.e., $g$
is of the form $g\colon V \to Y$ then we also call $g$ a \emph{$Y$-coloring} and
say that $G$ is a \emph{$Y$-colored graph}, or simply a $Y$-graph. When
$Y = \reals^n$ we also call $g$ an $n$-dimensional \emph{feature map}.  To ease
notation we write $v \in G$ to indicate that $v \in V$. Furthermore, we write
$G(v)$ instead of $g(v)$ to denote the color of $v$ in $G$, and we write
$G(v \to w)$ instead of $E(v \to w)$ to denote the multiplicity of edge
$v \to w$ in $G$. When $E$ is a set, i.e., when every edge has multiplicity at
most one, then we also call $G$ a \emph{simple} graph. We write 
$\inc_G(v)$ for the multiset $\mset{w \in G \mid w \to v \in E}$ of all incoming
neighbors of $v$. So, if the edge $w \to v$ has multiplicity $5$ in $E$ then $w$
also has multiplicity $5$ in $\inc_G(v)$.  We drop subscripts when the graph $G$
is clear from the context. The \emph{size} of a graph $G$ is the number of nodes
$\size{V}$ plus the number of simple edges $\size{\!\supp(E)}$. This is a
reasonable definition of the size of a multigraph, since for each edge it
suffices to simply store its multiplicity as a number, and storing a number
takes unit cost in the RAM model of computation.

\paragraph{Color transformers.} If $\ctrans$ is a function that maps $X$-colored
graphs $G = (V,E,g)$ into $Y$-colored graphs $G' = (V',E', g')$ that leaves
nodes and edges untouched and only changes the coloring, i.e., $V = V'$ and
$E = E'$ then we call $\ctrans$ a \emph{coloring transformer}. In particular, if
$X = \reals^p$ and $Y = \reals^q$ for some dimensions $p$ and $q$ then $\ctrans$ is
a \emph{feature map transformer}.

\paragraph{Graph Neural Networks.} Graph Neural Networks (\gnns) are a
popular form of neural networks that enable deep learning on graphs. Many
different forms of \gnns have been proposed in the literature. We refer the
reader to the overview by Hamilton
\cite{hamiltonGraphRepresentationLearning2020}. In this paper we focus on a
standard form of \gnns that is known under the name of \emph{aggregate-combine
  \gnns}~\cite{barceloExpressivePowerGraph2020}, also called
\emph{message-passing \gnns}. These are defined as
follows~\cite{groheLogicGraphNeural2021,geertsExpressivePowerMessagePassing2022}.

A \emph{\gnn layer} of input dimension $p$ and output dimension $q$ is a pair $(\agg, \comb)$ of functions  where (1) $\agg$ is an \emph{aggregation function} that maps finite multisets of vectors in $\reals^p$ to vectors in $\reals^h$ for some dimension $h$ and (2) $\comb$ is a \emph{combination function} $\comb\colon \reals^p \times \reals^h \to \reals^q$. In practice, $\agg$ is usually taken to compute the arithmetic mean, sum, or maximum of the vectors in the multiset, while $\comb$ is computed by means of a feedforward neural network whose parameters can be learned.

A \emph{\gnn} is a sequence $\L = (L_1,\dots, L_k)$ of \gnn layers, where the output dimension of $L_i$ equals the input dimension of $L_{i+1}$, for $1 \leq i < k$. The \emph{input and output dimensions} of the \gnn are the input dimension of $L_1$, and the output dimension of $L_k$ respectively. In what follows, we write $\L\colon p,q$ to denote that $p$ is the input dimension of $\L$ and $q$ is the output dimension.

Semantically, \gnn layers and \gnns are feature map transformers~\cite{geertsExpressivePowerMessagePassing2022} In particular, when \gnn layer $L = (\agg,\comb)$  of input dimension $p$ and output dimension $q$ is executed on $\reals^p$-colored graph $G = (V,E,g)$ it returns the $\reals^q$-colored graph $G'= (V,E, g')$ with $g'$ the $q$-dimensional feature map defined by
\[ g'\colon v \mapsto \comb \big( g(v), \agg\,\mset{ g(w) \mid w \in \inc_G(v)}
	\big). \] As such, for each node $v$, $\L$ aggregates the (multiplicity-weighted)
$\reals^p$ colors of $v$'s neighbors, and combines this with $v$'s own color to
compute the $\reals^q$ output.

A \gnn $\L\colon p,q$ simply composes the transformations defined by its layers: given $\reals^p$-colored graph $G$ it returns the $\reals^q$-colored graph $(L_k \circ L_{k-1} \circ \dots \circ L_1)(G)$.

\paragraph{Discussion.} It is important to stress that in the literature \gnns
are defined to operate on \emph{simple graphs}, whereas we have generalized
their semantics above to also work on \emph{multigraphs}. We did so because,
as we will see in Section~\ref{sec:methodology}, the result of compressing a
simple graph for the purpose of learning naturally yields a multigraph.

\paragraph{Learning problems.}
\gnns are used for a wide range of supervised learning tasks on graphs. For example, for a node $v$, the $\reals^q$-vector $\L(G)(v)$ computed for $v$ by \gnn $\L$ can be interpreted, after normalisation, as a probability distribution over $q$ new labels (for node classification),
or as predicted values (for node regression). Similarly, an edge prediction for nodes $v$ and $w$ can be
made based on the pair $(\L(G)(v), \L(G)(w))$. Finally, by aggregating $\L(G)(v)$ over all nodes $v \in G$, one obtains
graph embeddings that can be used for graph classification, regression and clustering~\cite{hamiltonGraphRepresentationLearning2020}.

In this work, we focus on the tasks of node classification and
regression.  Our methodology is equally applicable to the other
tasks, however.

In order to make precise what we mean by learning \gnns on compressed graphs for
node classification, we propose the following formal definition.
\begin{definition}
	A \emph{learning problem} of input dimension $p$ and output dimension $q$ is a tuple $\prob = (G, \train, \loss, \topol)$ where
	\begin{compactitem}
		\item $G$ is the $\reals^p$-colored graph on which we wish to learn;
		\item $\train$ is a subset of $G$'s nodes, representing the training set;
		\item $\loss\colon \train \times \reals^q \to \reals$ is a loss function that allows to quantify, for each node $v \in \train$ the dissimilarity $\loss(v,c)$ of the $\reals^q$-color $c$ that is predicted for $v$ by a \gnn and the desired $\reals^q$-color for $v$ as specified in the training set;
		\item $\topol$ is the \emph{hypothesis space}, a (possibly infinite)
		collection of \gnns of input dimension $p$ and output dimension $q$.
	\end{compactitem}
\end{definition}

Given a learning problem $\prob$, a learning algorithm produces a ``learned''
\gnn in $\topol$ by traversing the search space $\topol$. 
For each
currently considered \gnn $\L \in \topol$, the observed loss of $\L$ on $G$
w.r.t. $\train$ is computed as
\[ \loss( \L(G), \train ) := \sum_{v \in \train} \loss\big(v, \L(G)(v) \big). \] The learning algorithm aims to minimize this loss,
but possibly returns an $\L$ for which this is only a local minimum.

In practice, $\topol$ is usually a collection of \gnns with the same topology:
they all have the same number of layers (with each layer $d$ having  the same input and output dimensions accross \gnns in $\topol$) and are
parametrized by the same number of learnable parameters. 
Each concrete parametrization
constitutes a concrete \gnn in $\topol$ in our framework.  Commonly, the learned
\gnn $\L$ is then found by means of gradient descent, which updates the
learnable parameters of the \gnns in $\topol$ to minimize the observed
error.

No matter which concrete learning algorithm is used to solve a learning problem,
the intent is that the returned $\L$ generalizes well: it makes predictions on
$G$ for the nodes not in $\train$, and can also be applied to other,
new $\reals^p$-colored graphs to predict $\reals^q$-vectors for each node.

\paragraph{Our research question in this paper is the following.}

\begin{quote}
	\it Given a \gnn learning problem $\prob = (G, \train, \loss, \topol)$, is it
	possible to transform this into a new problem
	$\prob'= (G', \train', \loss', \topol)$ that is obtained by \emph{compressing}
	$G$, $\train$, and $\loss$ into a smaller graph $G'$, training
	set $\train'$, and
	loss function $\loss'$ such that instead of learning a \gnn on $\prob$ we
	could equivalently learn a \gnn on $\prob'$ instead?
\end{quote}
Here ``equivalently'' means that ideally, no matter which learning algorithm is
used, we would like the learned \gnn to be identical in both cases. Of course,
this is not possible in practice because the learning process is itself
non-deterministic, e.g., because the learning algorithm makes random starts;
because of stochasticity in stochastic gradient descent; or because of
non-deterministic dropout that is applied between layers. Nevertheless, we expect the \gnn obtained by learning on the compressed
problem would perform ``as good'' as the \gnn obtained by the learning on the
uncompressed problem, in the sense that it generalizes to unseen nodes and
unseen colored graphs equally well.

To ensure that we may hope any learning algorithm to perform equally well on $\prob'$ as on $\prob$, we formally define:

\begin{definition}
	\label{def:equiv-problems}
	Two learning problems $\prob$ and $\prob'$ are \emph{equivalent}, written
	$\prob \equiv \prob'$, if they share the same hypothesis space of \gnns
	$\topol$ and, for every $\L \in \topol$ we have
	$\loss( \L(G), \train ) = \loss'(\L(G'), \train')$.
\end{definition}
In other words, when traversing the hypothesis space for a \gnn to return, no learning algorithm can distinguish between $\prob$ and $\prob'$. All other things being equal, if the learning algorithm then returns a \gnn $\L$ when run on $\prob$, it will return $\L$ on $\prob'$ with the same probability.

Note that, while the hypothesis space $\topol$ remains unchanged in this
definition, it is possible (and, as we will see, actually required) to
adapt the loss function $\loss$ into a modified loss function $\loss'$ during
compression.

The benefit of a positive answer to our research question, if compression is
efficient, is computational efficiency: learning on smaller graphs is faster
than learning on larger graphs and requires less memory.

\section{Methodology}
\label{sec:methodology}
To compress one learning problem into an equivalent, hopefully smaller, problem
we will exploit recent insights into the expressiveness of
\gnns~\cite{morrisWeisfeilerLemanGo2019,xuHowPowerfulAre2019,barceloExpressivePowerGraph2020}. In
particular, it is known that if the local neighborhoods of two nodes $v, w$ in
input graph $G$ are equal, then any \gnn will treat $v$ and $w$ identically. In
particular, it will assign the same output colors to $v$ and $w$. We may
intuitively exploit this property for compression: since $v$ and $w$ are treated
identically there is no need for them both to be present during learning; having
one of them suffices. So, we could compress by removing nodes that are redundant in this sense. We must take
care, however, that by removing one, we do not change the structure (and hence,
possibly, the predicted color) of the remaining node. Also, of course, we need
to make sure that by removing nodes we do not lose training information. I.e.,
if $\train$ specifies a training color for $v$ but not $w$ then if we decide to
remove $v$, we somehow need to ``fix'' $\train$, as well as the loss function.

This section is devoted to developing this intuitive idea. In Section~\ref{sec:indist} we first study under which 
conditions  \gnns treat nodes identically. Next, in Section~\ref{sec:reducts} we develop compression of colored
graphs based on collapsing identically-treated nodes, allowing to remove  redundant nodes
while retaining the structure of the remaining nodes. 
Finally, in Section~\ref{sec:problem-compr}, we discuss compression of the training set and loss function. Together, these three ingredients allow us to compress a learning problem into an equivalent problem, cf.~Definition~\ref{def:equiv-problems}.

We close this section by proposing an alternative definition of compression that
works only for a limited class of learning problems.  It is nevertheless
interesting as it may allow better compression,  as we will show in Section~\ref{sec:evaluation}. 

\subsection{Indistinguishability}
\label{sec:indist}

The following definition formalizes when two nodes, not necessarily in the same graph, are treated identically by a class of \gnns.

\begin{definition}
  Let $\topol$ be a class of \gnns, let $G$ and $H$ be two
  $\reals^p$-colored graphs for some $p$, and let $v\in G,w \in H$ be two nodes
  in these graphs. We say that $(G,v)$ is $\topol$-\emph{indistinguishable}
  from $(H,w)$, denoted $(G,v) \indist{\topol} (H,w)$, if for every \gnn
  $\L \in \topol$ of input dimension $p$  it holds that
  $\L(G)(v) = \L(H)(w)$. 
\end{definition}
In other words, two nodes are indistinguishable by a class of \gnns $\topol$ if no $\gnn$ in $\topol$ can ever assign a different output color to these nodes, when started on $G$ respectively $H$. We call $(G,v)$ and $(H,w)$  $\topol$-\emph{distinguishable} otherwise. 

For the purpose of compression, we are in search of sets of nodes in the input
graph $G$ that are pairwise $\topol$-indistinguishable, with $\topol$ the
hypothesis space of the input learning problem. It is these nodes that we can
potentially collapse in the input learning problem. 
Formally, let $\class{G,v}{\topol}$ denote the set of all nodes in $G$ that are $S$-indistinguishable from $v$,
\[ \class{G,v}{\topol} := \{w \in G \mid (G,v) \indist{\topol} (G,w) \}.\]
We aim to calculate $\class{G,v}{\topol}$ and subsequently compress $G$ by removing all but one node in $\class{G,v}{\topol}$ from $G$.

\paragraph*{Color refinement.}
To calculate $\class{G,v}{\topol}$, we build on the work of Morris et
al.~\cite{morrisWeisfeilerLemanGo2019} and Xu et
al.~\cite{xuHowPowerfulAre2019}. They proved independently that a GNN can
distinguish two nodes if an only if the so-called \emph{color refinement}
algorithm assigns different colors to these nodes. Color refinement is
equivalent to the one-dimensional Weisfeiler-Leman (WL)
algorithm~\cite{groheLogicGraphNeural2021}, and works as follows.

\begin{definition}
  \label{def:color-refinement}
  The (one-step) \emph{color-refinement} of colored graph $G=(V,E,g)$, denoted
  $\colr(G)$, is the colored graph $G'= (V,E,g')$ where $g'$ maps every node
  $v \in G$ to a pair, consisting of $v$'s original color and the multiset of
  colors of its incoming neighbors:
\[ g'\colon v \mapsto \big( G(v), \mset{ G(w) \mid w \in
    \inc_G(v)}\big).\] As such, we can think of $\colr(G)(v)$ as representing
the immediate neighborhood of $v$ (including $v$), for any node $v$.
\end{definition}
We denote by $\colr^d(G)$  the result of applying $d$ color refinement steps on $G$, so $\colr^0(G) = G$ and $\colr^{d+1}(G) = \colr(\colr^d(G))$. Using this notation, we can think of  $\colr^d(G)(v)$ as representing the local neighborhood of $v$ ``up to radius $d$''.

To illustrate, Figure~\ref{fig:color-refinement} shows a colored graph $G$ and two steps of color refinement. 

\newcommand{\circref}[1]{\textnormal{\CircledText[outer color=blue]{\scriptsize{\ref{#1}}}}}

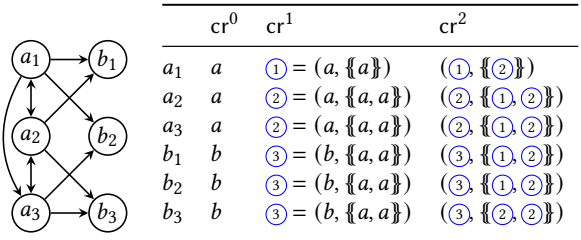
\begin{figure}
  \centering
  \begin{tikzpicture}[->,>=stealth, semithick, node distance=0.5cm]
    \tikzstyle{bnode}=[draw=black,text=black,inner sep=1pt, minimum size=5mm, circle]

    \node[bnode] (a1) {$a_1$};
    \node[bnode, below=of a1] (a2) {$a_2$};
    \node[bnode, below=of a2] (a3) {$a_3$};
    \node[bnode, right=of a1] (b1) {$b_1$};
    \node[bnode, below=of b1] (b2) {$b_2$};
    \node[bnode, below=of b2] (b3) {$b_3$};

    \draw[->] (a1) edge[bend right=30] (a3);
    \draw[<->] (a2) --  (a3);
    \draw[<->] (a2) --  (a1);
    \draw[<-] (b1) -- (a1);
    \draw[<-] (b1) -- (a2);
    \draw[<-] (b2) -- (a1);
    \draw[<-] (b2) -- (a3);
    \draw[<-] (b3) -- (a2);
    \draw[<-] (b3) -- (a3);

    \node[minimum width=6cm,anchor=west] at (1.5,-0.75) {
      \begin{tabular}[t]{@{}lllll@{}}
        \toprule
        &$\colr^0$&$\colr^1$&$\colr^2$ \\ \midrule
        $a_1$ & $a$ & \cstep\label{c1} $=(a,\mset{a})$                                                                 
& $(\circref{c1}, \mset{\circref{c2}})$
 \\
        $a_2$ & $a$ & \cstep\label{c2} $=(a,\mset{a,a})$ &  $(\circref{c2}, \mset{\circref{c1},\circref{c2}})$
\\        
        $a_3$ & $a$ & \circref{c2} $=(a,\mset{a,a})$ & 
 $(\circref{c2}, \mset{\circref{c1},\circref{c2}})$
\\
        $b_1$ & $b$ & \cstep\label{c3} $=(b,\mset{a,a})$ & 
 $(\circref{c3}, \mset{\circref{c1},\circref{c2}})$
\\
        $b_2$ & $b$ & \circref{c3} $=(b,\mset{a,a})$ & 
 $(\circref{c3}, \mset{\circref{c1},\circref{c2}})$                                         
\\
        $b_3$ & $b$ & \circref{c3} $=(b,\mset{a,a})$  
& $(\circref{c3}, \mset{\circref{c2},\circref{c2}})$
\\
\end{tabular}
      };
  \end{tikzpicture}
  
  \caption{Example of color refinement. Nodes $a_1,a_2, a_3$ have the same color $a$; nodes $b_1,b_2,b_3$ have the same color $b$. All edges have multiplicity $1$.}
  \label{fig:color-refinement}
\end{figure}

The following property was observed by Morris et
al.~\cite{morrisWeisfeilerLemanGo2019} and Xu et
al~\cite{xuHowPowerfulAre2019} for \gnns operating on \emph{simple} graphs. We here extend it to multigraphs. 

\begin{propositionrep}
  \label{prop:indistinguishable-d}
  Let $\L$ be a  \gnn composed of $d \in \nats$ layers, $d \geq 1$. If
  $\colr^d(G)(v) = \colr^d(H)(w)$ then $\L(G)(v) = \L(H)(w)$. As a
  consequence, if $\topol$ is a hypothesis space consisting of \gnns of at most $d$ layers and
  $\colr^d(G)(v) = \colr^d(H)(w)$ then $(G,v) \indist{\topol} (H,w)$.
\end{propositionrep}
\begin{proof}
  It suffices to show that, for all graphs $G,H$ and nodes $v \in G, w \in H$ it
  holds that, if $\colr(G)(v) = \colr(H)(w)$ then $L(G)(v) = L(H)(w)$ for any
  GNN layer $L$. Using this observation, the proposition then follows for \gnns
  by straightforward induction on the number of layers $d$ and the fact that if
  $\colr^d(G)(v) = \colr^d(H)(w)$, then also
  $\colr^{d'}(G)(v) = \colr^{d'}(H)(w)$ for all $d' \leq d$.\footnote{This
    latter fact is because $\colr^{d}(G)(v)$ is a pair, whose first component
    is $\colr^{d-1}(G)(v)$ (which is a pair, whose first component is
    $\colr^{d-2}(G)(v)$, and so on), and similarly for $\colr^{d}(H)(w)$.}

    So, assume that $\colr(G)(v) = \colr(H)(w)$. Since, by definition,
    \begin{align*}
      \colr(G)(v) & = \left(G(v), \mset{G(v') \mid v' \in \inc_G(v)} \right)\\
      \colr(H)(w) & = \left(H(w), \mset{H(w') \mid w' \in \inc_H(w)} \right)
    \end{align*}
    we may conclude that, $G(v) = H(w)$ and $\mset{G(v') \mid v' \in \inc_G(v)} = \mset{H(w') \mid w' \in \inc_H(w)}$.
    Let $L = (\agg, \comb)$ be an arbitrary \gnn layer. Then, 
    \begin{align*}
      \L(G)(v) & = \comb(G(v), \agg \mset{G(v') \mid v' \in \inc_G(v)}) \\
               & = \comb(H(w), \agg \mset{H(w') \mid w' \in \inc_H(w)}) \\
      & = \L(H)(w) \qedhere
    \end{align*}
\end{proof}
In other words, $d$-layer \gnns cannot distinguish nodes that are assigned the
same color by $d$ steps of color refinement. 

Let $\class{G,v}{d}$ denote the set of all nodes in $G$ that receive the same color as $v$ after $d$ steps of color refinement,
\[ \class{G,v}{d} := \{ w \in G \mid \colr^{d}(G)(v) = \colr^{d}(G)(w)
  \}.\] Then it follows from Proposition~\ref{prop:indistinguishable-d} that
$\class{G,v}{d}$ is a \emph{refinement} of $\class{G,v}{\topol}$ in the
sense that $\class{G,v}{d} \subseteq \class{G,v}{\topol}$, for all
$v \in G$. 
Morris et
al.~\cite{morrisWeisfeilerLemanGo2019} and Xu et al.~\cite{xuHowPowerfulAre2019} have also shown
that for every graph $G$ and every depth $d$ there exists a \gnn $\L$ of $d$ layers such that $\class{G,v}{d} = \class{G,v}{\{\L\}}$, for every node $v \in G$. Consequently, if, in addition to containing only \gnns with at most $d$ layers, $\topol$ includes
\emph{all possible} $d$-layer \gnns, then
$\class{G,v}{d} = \class{G,v}{\topol}$ coincide, for all $v \in G$. Hence,
for such $\topol$ we may calculate $\class{G,v}{\topol}$ by calculating
$\class{G,v}{d}$ instead. When $\topol$ does not include all $d$-layer \gnns
we simply use $\class{G,v}{d}$ as a proxy for $\class{G,v}{\topol}$. This is
certainly safe: since $\class{G,v}{d} \subseteq \class{G,v}{\topol}$ no
\gnn in $\topol$ will be able to distinguish the nodes in $\class{G,v}{d}$ and
we may hence collapse nodes in $\class{G,v}{d}$ for the purpose of
compression. In this case, however, we risk that $\class{G,v}{d}$ contains too
few nodes compared to $\class{G,v}{\topol}$, and therefore may not provide
enough opportunity for compression.  We will return to this issue in Section~\ref{sec:graded}.

What happens if there is no bound on the number of layers of \gnns in $\topol$? In that case we can still use color refinement to compute $\class{G,v}{\topol}$ as follows. 
It is known that after a finite number of color refinements steps we reach a
value $d$ such that for all nodes $v\in G$ we have $\class{G,v}{d} = \class{G,v}{d+1}$. The smallest value $d$ for which this holds is called the \emph{stable coloring number} of $G$, and we denote the colored graph obtained by this value of $d$ by $\colr^{\infty}(G)$ in what follows. Similarly we denote the equivalence classes at this value of $d$ by $\class{G,v}{\infty}$.
From Proposition~\ref{prop:indistinguishable-d} it readily follows:
\begin{corollary}
  \label{cor:indistinguishable-inf}
  For any class $\topol$ of GNNs,
  if $\colr^{\infty}(G)(v) = \colr^{\infty}(H)(w)$ then $(G,v) \indist{\topol} (H,w)$.
\end{corollary}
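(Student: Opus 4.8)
The plan is to reduce Corollary~\ref{cor:indistinguishable-inf} to Proposition~\ref{prop:indistinguishable-d}. The key observation is that any GNN $\L \in \topol$ has some finite number of layers, say $k$, since a GNN is by definition a finite sequence of layers. So it suffices to show that $\colr^{\infty}(G)(v) = \colr^{\infty}(H)(w)$ implies $\colr^{k}(G)(v) = \colr^{k}(H)(w)$ for every $k \in \nats$; Proposition~\ref{prop:indistinguishable-d} (applied with $d = k$, after handling the trivial case $k = 0$ separately, where $\colr^0 = \mathrm{id}$ and equality of stable colors already implies equality of original colors because the original color is recoverable from the stable color) then gives $\L(G)(v) = \L(H)(w)$, and since $\L$ was arbitrary we conclude $(G,v) \indist{\topol} (H,w)$.

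So the real content is: if the stable colors of $v$ in $G$ and $w$ in $H$ agree, then the $k$-step refinement colors agree for all $k$. First I would recall the footnote fact already used in the proof of Proposition~\ref{prop:indistinguishable-d}: if $\colr^{d}(G)(v) = \colr^{d}(G')(v')$ then $\colr^{d'}(G)(v) = \colr^{d'}(G')(v')$ for all $d' \le d$, because $\colr^{d}(\cdot)$ is a nested pair whose first component is $\colr^{d-1}(\cdot)$. This handles all $k \le d$ where $d$ is, say, the larger of the two stable coloring numbers of $G$ and $H$. For $k > d$, I would use the stability property: on $G$, once we reach the stable coloring number $d_G$, the partition of nodes induced by $\colr^{d_G}$ equals the one induced by $\colr^{d_G+1}$, hence (by an easy induction, since color refinement is monotone — the partition can only get finer) equals the one induced by $\colr^{k}$ for all $k \ge d_G$. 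In other words, for $k \ge d_G$, $\colr^{k}(G)(v) = \colr^{k}(G)(v')$ iff $\colr^{d_G}(G)(v) = \colr^{d_G}(G)(v')$, and the analogous statement holds for $H$.

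The slightly delicate point — and the one I expect to be the main obstacle — is that equality of \emph{partitions} across the two graphs is not literally equality of \emph{colors}: the concrete color values $\colr^{d}(G)(v)$ and $\colr^{d}(H)(w)$ live in sets that depend on $G$ and $H$, so a priori "$v$ and $w$ are in corresponding cells of stable partitions of $G$ and $H$" is weaker than "$\colr^{\infty}(G)(v) = \colr^{\infty}(H)(w)$". But the hypothesis of the corollary is stated as literal equality of colors, which is exactly the strong form we want; and the recursive structure of $\colr$ means that literal equality at step $d$ propagates to literal equality at step $d+1$ precisely when it propagates at the level of the induced neighborhood multisets. Concretely, I would argue: assuming $\colr^{k}(G)(v) = \colr^{k}(H)(w)$ for some $k \ge \max(d_G, d_H)$ — which we have for $k = \max(d_G,d_H)$ by the downward-closure fact — stability on each side gives that the multiset $\mset{\colr^{k}(G)(v') \mid v' \in \inc_G(v)}$ already determines $\mset{\colr^{k+1}(G)(v') \mid v' \in \inc_G(v)}$ up to the bijection implicit in stability, and symmetrically on $H$; combined with the equality at step $k$ this yields $\colr^{k+1}(G)(v) = \colr^{k+1}(H)(w)$, closing the induction. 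Thus $\colr^{k}(G)(v) = \colr^{k}(H)(w)$ for all $k$, and we are done. An alternative, cleaner route that sidesteps the partition-versus-color subtlety entirely is to observe that $\colr^{\infty}(G)(v) = \colr^{\infty}(H)(w)$ directly forces $\colr^{k}(G)(v) = \colr^{k}(H)(w)$ for all $k$: for $k \le d_{\max}$ by downward closure, and for $k > d_{\max}$ by noting that the map sending the stable color of a node to its $(k)$-step color is the same deterministic function on both graphs once $k \ge d_{\max}$ (it just re-wraps the stable information without adding anything new), so equal stable colors give equal $k$-step colors. I would present this second argument, as it keeps everything at the level of color values and needs only Proposition~\ref{prop:indistinguishable-d} plus the definition of the stable coloring number.
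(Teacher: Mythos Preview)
Your plan to reduce to Proposition~\ref{prop:indistinguishable-d} via the implication ``$\colr^{\infty}(G)(v) = \colr^{\infty}(H)(w) \Rightarrow \colr^{k}(G)(v) = \colr^{k}(H)(w)$ for all $k$'' has a real gap in the step $k > d_{\max}$, and the gap is not repairable as written: that implication is false for $G \neq H$. Take $G$ to consist of two nodes $u_1,u_2$ of the same initial color $c$ with mutual edges $u_1 \to u_2$ and $u_2 \to u_1$, and $H$ a single isolated node $u_3$ of color $c$. Both graphs have stable coloring number~$0$, so $\colr^{\infty}(G)(u_1) = c = \colr^{\infty}(H)(u_3)$; yet $\colr^{1}(G)(u_1) = (c,\mset{c}) \neq (c,\mset{\,}) = \colr^{1}(H)(u_3)$, and any one-layer \gnn with sum aggregation separates them. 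Both of your proposed arguments fail on this example. In the first, the ``bijection implicit in stability'' sending $k$-colors to $(k{+}1)$-colors is determined by the neighborhood structure of the particular graph, so the bijection for $G$ need not agree with the one for $H$; equality of $k$-colors across the two graphs therefore does not propagate to step $k{+}1$. In the second, your claim that the map from stable color to $k$-step color is ``the same deterministic function on both graphs'' is precisely what the example refutes: the $k$-step color encodes neighborhood information that is graph-specific and is not recoverable from the stable color alone.

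The paper offers no argument beyond ``readily follows'', so there is nothing substantive to compare your attempt against; the example above shows that, with $\colr^{\infty}$ defined per graph at that graph's own stable coloring number, the corollary as literally stated is in fact false. The standard repair is to run color refinement on the disjoint union $G \uplus H$ (equivalently, take $d$ to be the stable coloring number of the union): stability is then a joint statement, so equal $d$-colors on the union genuinely force equal $(d{+}1)$-colors, and your downward-closure argument handles $k \le d$. For the paper's sole use of the corollary (the proof of Theorem~\ref{thm:compression}) one can bypass it entirely: an $\infty$-substitution $\rho$ on $G$ is automatically a $k$-substitution for every $k$ at least the stable coloring number (the partitions coincide), so Proposition~\ref{prop:compression-same-colr-bounded-rounds} already gives $\colr^{k}(G)(v) = \colr^{k}(G/\rho)(\rho(v))$ for every $k$, and Proposition~\ref{prop:indistinguishable-d} applies directly.
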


We note that it is very efficient to compute the set
$\{ \class{G,v}{d} \mid v \in G\}$ of all color refinement classes: this can be
done in time $\mathcal{O}((n+m)\log n)$ with $n$ the number of vertices and $m$
the number of edges of the input graph~\cite{cardonPartitioningGraphLog1982}.  

\begin{example}
  To illustrate, consider the colored graph from
  Figure~\ref{fig:color-refinement}, as well as the color refinement steps
  illustrated there. (Recall that nodes $a_1,a_2,a_3$ share the same color, as do $b_1,b_2,b_3$.)  Then after one step of color refinement we have
  \begin{align*}
    \small
    \class{G,a_1}{1} & = \{ a_1 \} \\
    \class{G,a_2}{1} & = \class{G,a_3}{1} = \{ a_2, a_3 \} \\
    \class{G,b_1}{1} & =  \class{G,b_2}{1} = \class{G,b_3}{1} = \{ b_1,b_2,b_3\},
  \end{align*}
  while  after two steps of color refinement we obtain the following color refinement classes:
  \begin{align*}
    \small
    \class{G,a_1}{2} & = \{ a_1 \} & \class{G,b_1}{2} & = \class{G,b_2}{2} = \{ b_1,b_2\} \\
    \class{G,a_2}{2} & = \class{G,a_3}{2} = \{ a_2,a_3 \} &   \class{G,b_3}{2} & = \{ b_3\}.
  \end{align*}
  We invite the reader to check that for every node $v$ in this graph, $\class{G,v}{3} = \class{G,v}{2}$. As such, the stable coloring is obtained when $d=2$ and $\class{G,v}{2} = \class{G,v}{\infty}$.
\end{example}

\subsection{Graph reduction}
\label{sec:reducts}

Having established a way to compute redundant nodes, we now turn our attention
to compression. Assume that we have already computed the color refinement
classes $\{ \class{G,v}{d} \mid v \in G \}$ for $d \in
\natsinf$. For each $v \in G$, we wish to ``collapse'' all nodes in
$\class{G,v}{d}$ into a single node.  To that end, define a
\emph{$d$-substitution} on a graph $G$ to be a function that maps
each color refinement class in $\{ \class{G,v}{d} \mid v \in G \}$ to a node
$\rho(\class{G,v}{d}) \in \class{G,v}{d}$. Intuitively, $\rho(\class{G,v}{d})$ is the node that we wish to keep; all other nodes in $\class{G,v}{d}$ will be removed. In what follows we extend $\rho$ to also operate on nodes in $G$ by
setting $\rho(v) = \rho(\class{G,v}{d})$. 

\begin{definition}
  The \emph{reduction of graph $G$ by $d$-substitution $\rho$ on $G$} is the graph $H = (V, E, h)$ where
  \begin{itemize}
  \item $V = \{ \rho(v) \mid v \in G \}$
  \item For all $v, w \in V$ we have
    \[ E( v \to w ) = \sum_{v' \in \class{G,v}{d}} G(v' \to w) \]
    In particular, if there is no edge into $w$ in $G$, there will be no edge into $w$ in $H$, as this sum is then zero.
  \item nodes retain colors: for each node $v \in H$ we have $h(v) = G(v)$.
  \end{itemize}
  In what follows, we denote by $G/\rho$ the reduction of $G$ by $\rho$. A graph
  obtained by reducing $G$ according to some $d$-substitution $\rho$ is called a
  \emph{$d$-reduct of $G$}.
\end{definition}

\begin{example}
  \label{ex:reductions}
  Reconsider the colored graph $G$ of
  Figure~\ref{fig:color-refinement}. Let $\rho_1$ and $\rho_2$ be the following $d=1$-substitutions:
  \begin{align*}
    \rho_1\colon \{a_1\} & \mapsto a_1 & \{a_2,a_3\} & \mapsto a_2 & 
                                                                     \{b_1,b_2,b_3\} & \mapsto b_1 \\
    \rho_2\colon \{a_1\} & \mapsto a_1 & \{a_2,a_3\} & \mapsto a_2 & 
                                                                     \{b_1,b_2,b_3\} & \mapsto b_3
  \end{align*}
  Then $G/\rho_1$ and $G/\rho_2$ are illustrated in Figure~\ref{fig:reductions}.
\end{example}

\begin{figure}
  \centering
  \begin{tikzpicture}[->,>=stealth, semithick, node distance=0.5cm]
    \tikzstyle{bnode}=[draw=black,text=black,inner sep=1pt, minimum size=5mm, circle]
    \tikzstyle{wnode}=[text=gray,inner sep=1pt]

    \begin{scope}
    \node[bnode] (a1) at (0.0cm, 0.0cm) {$a_1$};
    \node[bnode] (a2) at (0.0cm,-1.0cm) {$a_2$};
    \node[bnode] (b1) at (1.0cm, 0.0cm) {$b_1$};

    \path[->] (a1) edge node[left,wnode] {$1$} (a2);
    \path[->] (a1) edge node[above,wnode] {$1$} (b1);
    \path[->] (a2) edge [loop below] node[wnode] {$1$} (a2);
    \path[->] (a2) edge node[below,wnode] {$1$} (b1);
    \end{scope}

    \begin{scope}[xshift=3cm]
    \node[bnode] (a1) at (0.0cm, 0.0cm) {$a_1$};
    \node[bnode] (a2) at (0.0cm,-1.0cm) {$a_2$};
    \node[bnode] (b3) at (1.0cm,-1.0cm) {$b_3$};

    \path[->] (a1) edge node[left,wnode] {$1$} (a2);
    \path[->] (a2) edge [loop below] node[wnode] {$1$} (a2);
    \path[->] (a2) edge node[below,wnode] {$2$} (b3);
    \end{scope}
  \end{tikzpicture}
  
  \caption{Reduction of the graph of Figure~\ref{fig:color-refinement} by the $1$-substitutions $\rho_1$ and $\rho_2$ from Example~\ref{ex:reductions}.}     \label{fig:reductions}
\end{figure}
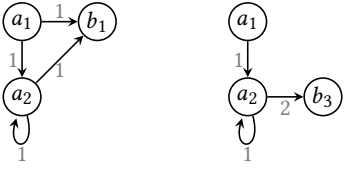

The following proposition is an essential property of our compression methodology.

\begin{propositionrep}
  \label{prop:compression-same-colr}
  For every graph $G$, every $d$-substitution $\rho$ of $G$ with $d \in \natsinf$, and every node $v \in G$ we have $\colr^d(G)(v)  = \colr^d(G/\rho)(\rho(v))$. 
\end{propositionrep}
\begin{proof}
  Fix $d \in \natsinf$ arbitrarily. When $d \not = \infty$, the statement follows from Proposition~\ref{prop:compression-same-colr-bounded-rounds} below. 
It hence remains to prove the case where $d = \infty$. To that end, let $c_1 \in \nats$ be the stable coloring number of $G$, and let $c_2\in \nats$ be the stable coloring number of $G/\rho$. We prove that $c_1 = c_2$. From this, the claimed equality directly follows: 
  \begin{align*}
    \colr^{\infty}(G)(v)  & = \colr^{c_1}(G)(v) \\
                          & = \colr^{c_2}(G)(v) \\
                          & = \colr^{c_2}(G/\rho)(\rho(v)) \\
    & = \colr^{\infty}(G/\rho)(\rho(v))
  \end{align*}
  Here, the first and last equality are by definition of $\colr^{\infty}$; the
  second equality is because $c_1 = c_2$; and the third equality is by
  Proposition~\ref{prop:compression-same-colr-bounded-rounds}.

  To prove that $c_1 = c_2$ it suffices to show that  $c_1 \geq c_2$ and $c_1 \leq c_2$. We only explain how to obtain the inequality $c_1 \geq c_2$; the other direction is similar.

  To show that  $c_1 \geq c_2$, we must show that $\class{G/\rho,\rho(v)}{c_1} = \class{G/\rho,\rho(v)}{c_1+1}$ for all nodes $\rho(v) \in G/\rho$. Since  $\class{G/\rho,\rho(v)}{c_1} \supseteq \class{G/\rho,\rho(v)}{c_1+1}$ trivially holds by definition of $\colr$, it remains to show that if $\rho(w) \in \class{G/\rho,\rho(v)}{c_1}$ then also $\rho(w) \in \class{G/\rho,\rho(v)}{c_1+1}$, for all nodes $\rho(w) \in G/\rho$. Hence, fix $\rho(w) \in G/\rho$ and  assume $\colr^{c_1}(G/\rho)(\rho(v)) = \colr^{c_1}(G/\rho)(\rho(w))$. Then, by  Proposition~\ref{prop:compression-same-colr-bounded-rounds} we derive
  \begin{align*}
    \colr^{c_1}(G)(v) & = \colr^{c_1}(G/\rho)(\rho(v)) 
                      =  \colr^{c_1}(G/\rho)(\rho(w))
                      = \colr^{c_1}(G)(w)
  \end{align*}
  Since $c_1$ is the stable coloring number of $G$ and
  $\colr^{c_1}(G)(v) = \colr^{c_1}(G)(w)$,  also
  $\colr^{c_1+1}(G)(v) = \colr^{c_1+1}(G)(w)$. Therefore, again by Proposition
  Proposition~\ref{prop:compression-same-colr-bounded-rounds},
  \begin{align*}
    \colr^{c_1+1}(G/\rho)(\rho(v))  & = \colr^{c_1+1}(G)(v)  \\
                     & =  \colr^{c_1+1}(G)(w)\\
                      & = \colr^{c_1+1}(G/\rho)(\rho(w)).
  \end{align*}
  As such, $\rho(w) \in \class{G/\rho, \rho(v)}{c_1+1}$, as desired.
\end{proof}

\begin{toappendix}

  \begin{proposition}
    \label{prop:compression-same-colr-bounded-rounds}
    For every graph $G$, every $d$-substitution $\rho$ of $G$ with $d \in \nats$
    (i.e., $d \not = \infty$), and every node $v \in G$ we have
    $\colr^d(G)(v) = \colr^d(G/\rho)(\rho(v))$.
  \end{proposition}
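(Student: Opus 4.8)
The plan is to prove, by induction on $e$, the stronger statement that for every $e \in \nats$ with $e \le d$ and every node $v \in G$ we have $\colr^e(G)(v) = \colr^e(G/\rho)(\rho(v))$; the proposition is then the instance $e = d$. Strengthening to all $e \le d$ is what makes an induction feasible, since the $d$-substitution $\rho$ is fixed throughout. The one elementary fact I will lean on repeatedly is that color refinement only refines: $\colr^d(G)(x) = \colr^d(G)(y)$ implies $\colr^e(G)(x) = \colr^e(G)(y)$ for all $e \le d$ (the footnote observation in the proof of Proposition~\ref{prop:indistinguishable-d}). Two consequences I will use: since $\rho(v) \in \class{G,v}{d}$, the nodes $v$ and $\rho(v)$ agree under $\colr^e$ for every $e \le d$; and distinct $\colr^d$-classes have distinct representatives, so the node set $\{\rho(v) \mid v \in G\}$ of $G/\rho$ meets each $\colr^d$-class of $G$ in exactly one node $u$, for which $\class{G,u}{d}$ is precisely the class that $\rho$ collapses onto $u$.

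For the base case $e = 0$: $\colr^0$ is just the original coloring, which the reduct retains by definition, and $\rho(v)$ has the same color as $v$ because $\rho(v) \in \class{G,v}{d} \subseteq \class{G,v}{0}$. For the inductive step, assume the claim for some $e < d$, fix $v$, and write $H = G/\rho$. Expanding $\colr^{e+1}(G)(v) = \bigl(\colr^e(G)(v),\, \mset{\colr^e(G)(v') \mid v' \in \inc_G(v)}\bigr)$ and $\colr^{e+1}(H)(\rho(v)) = \bigl(\colr^e(H)(\rho(v)),\, \mset{\colr^e(H)(u) \mid u \in \inc_H(\rho(v))}\bigr)$, the first components agree by the induction hypothesis, so it remains to show $\mset{\colr^e(H)(u) \mid u \in \inc_H(\rho(v))} = \mset{\colr^e(G)(v') \mid v' \in \inc_G(v)}$. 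I will establish this by comparing, for an arbitrary $e$-color $\gamma$, its multiplicity on both sides.

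On the $G$ side the multiplicity of $\gamma$ is $\sum_{v' : \colr^e(G)(v') = \gamma} G(v' \to v)$. On the $H$ side, each node $u$ of $H$ occurs in $\inc_H(\rho(v))$ with multiplicity $E_H(u \to \rho(v)) = \sum_{u' \in \class{G,u}{d}} G(u' \to \rho(v))$ by the definition of graph reduction, and $\colr^e(H)(u) = \colr^e(G)(u)$ by the induction hypothesis (recall $\rho(u) = u$ for any node $u$ of $H$, since $H$'s nodes are the representatives); hence the multiplicity of $\gamma$ on the $H$ side equals $\sum_{u \in H\,:\,\colr^e(G)(u) = \gamma}\ \sum_{u' \in \class{G,u}{d}} G(u' \to \rho(v))$. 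Now comes the key reorganization: because $e \le d$, each $\colr^e$-class of $G$ is a disjoint union of $\colr^d$-classes, and the $\colr^d$-classes of $G$ are exactly the sets $\class{G,u}{d}$ for $u$ a node of $H$; so the double sum collapses to $\sum_{u' : \colr^e(G)(u') = \gamma} G(u' \to \rho(v))$. It then remains to see that $\sum_{u' : \colr^e(G)(u') = \gamma} G(u' \to v)$ equals $\sum_{u' : \colr^e(G)(u') = \gamma} G(u' \to \rho(v))$, and this holds because these two sums are precisely the multiplicities of $\gamma$ in the second components of $\colr^{e+1}(G)(v)$ and $\colr^{e+1}(G)(\rho(v))$, which coincide since $\rho(v) \in \class{G,v}{d}$ and $e+1 \le d$. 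This yields the multiset equality and completes the induction.

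The step that will require the most care is this last piece of multiset bookkeeping: turning the reduct's ``sum over the source class'' edge definition into a statement purely about $e$-colors. It works precisely because the color-refinement partitions are nested ($\colr^d$ refines $\colr^e$ whenever $e \le d$), so that grouping by the $\colr^d$-class representatives of $e$-color $\gamma$ and summing within each class recovers the plain sum over all nodes of $e$-color $\gamma$. Everything else is a routine unwinding of the definitions of $\colr$ and of graph reduction.
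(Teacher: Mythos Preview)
Your proof is correct and follows essentially the same strategy as the paper's: strengthen to all $e \leq d$ and induct on $e$, with the base case from color preservation and the inductive step by matching first components via the induction hypothesis and second components via a multiset equality. The only minor tactical difference is that the paper packages the reduct's edge definition into the clean observation $\inc_{G/\rho}(\rho(v)) = \mset{\rho(w) \mid w \in \inc_G(\rho(v))}$ and then manipulates the multisets directly, whereas you establish the same multiset equality by comparing multiplicities color-by-color; both routes are valid and amount to the same bookkeeping.
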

  \begin{proof}
    Let $d$ be an arbitrary but fixed natural number, and let $\rho$ be a
    $d$-substitution of $G$. For ease of
    readability, let us abbreviate $G/\rho$ by $H$.

    Throughout the proof, we will use the following  observations.

    (O1.) By definition of $d$-substitutions,
    $\rho(\class{G,v}{d}) \in \class{G,v}{d}$ for every node $v \in G$. Because we
    have extended $d$-substitutions to nodes by setting
    $\rho(v) = \rho(\class{G,v}{d})$, it follows in particular that
    $\rho(v) \in \class{G,v}{d}$ for every node $v$. Therefore, by definition
    of $\class{G,v}{d}$ we have $\colr^d(G)(v) = \colr^d(G)(\rho(v))$ for every
    $v \in G$. Because $d$-step color refinement includes $d-1$-step color
    refinement in the first component of the pair that it outputs, this also
    implies that $\colr^l(G)(v) = \colr^l(G)(\rho(v))$ for every $l \leq d$.

    (O2.) In addition, we note that for every node $w \in H = G/\rho$ we have
    $\rho(w) = w$. Indeed: only nodes that appear in the image of $\rho$ are in
    $G/\rho$. As such, if $w \in G/\rho$ there exists some $w'$ such that
    $w = \rho(\class{G,w'}{d})$. Because
    $\rho(\class{G,w'}{d}) \in \class{G,w'}{d}$, it follows that
    $w \in \class{G,w'}{d}$ and as such, $\class{G,w}{d} =
    \class{G,w'}{d}$. Hence, $w = \rho(\class{G,w}{d}) = \rho(w)$.

    (O3.) In addition, we note that for every node $w \in H$ and every node
    $w'\in \class{G,w}{d}$ we have $\rho(w') = w$. Indeed, because
    $w'\in \class{G,w}{d}$ we have $\class{G,w'}{d} = \class{G,w}{d}$. As such
    $\rho(w') = \rho(\class{G,w'}{d}) = \rho(\class{G,w}{d}) = \rho(w) = w$, where
    the last equality is due to observation (O2). 

    (O4.) For every node $v \in H$ we have $\inc_H(v) = \mset{\rho(w) \mid w \in \inc_G(v)}$. Indeed, let $E_H$ denote the multiset of edges of $H$ and $E_G$ the multiset of edges of $G$. Then
    \begin{align*}
      \inc_H(v) & = \mset{ w \mid (w \to v) \in E_H} \\
                & = \mset{w \mid w' \in \class{G,w}{d}, (w'\to v) \in E_G} \\
                & = \mset{w \mid \rho(w') = w, (w'\to v) \in E_G} \\
                & = \mset{\rho(w') \mid (w' \to v) \in E_G} \\
                & = \mset{\rho(w') \mid w' \in \inc_G(v) } \\
                & = \mset{\rho(w) \mid w \in \inc_G(v) }
    \end{align*}
    The first equality is by definition of $\inc_H$; the second because by definition the multiplicity of $w \to v$ in $E_H$ equals $\sum_{w' \in \class{G,w}{d}} E_G(w'\to v)$; and  the third by (O3).

    To prove the proposition, we now prove the stronger statement that for every
    $l \leq d$ and every node $v \in G$ we have
    $\colr^l(G)(v) = \colr^l(G/\rho)(\rho(v))$. Clearly
    $\colr^d(G)(v) = \colr^d(G/\rho)(\rho(v))$ then follows when $l = d$.

    The proof of this stronger statement is by induction on $l$. 
    For the base case when $l = 0$ we have, for every $v \in G$,
    \[ \colr^0(H)(\rho(v)) = H(\rho(v)) = G(\rho(v)) = \colr^0(G)(\rho(v)) =
      \colr^0(G)(v)\] as desired. Here, the first equality is by definition of
    $\colr^0$; the second  is by definition of $H = G/\rho$; the third is
    again by definition of $\colr^0$; and the final equality is by observation (O1).

    For the inductive case $l>0$, consider an arbitrary node $v \in G$.  Then, by
    definition,
    \begin{align*}
      \colr^l(G)(v) & = (\colr^{l-1}(G)(v), \mset{ \colr^{l-1}(G)(w)  \mid w \in \inc_G(v) }) \\
      \colr^l(H)(\rho(v))    & = (\colr^{l-1}(H)(\rho(v)), \mset{ \colr^{l-1}(H)(w')  \mid w' \in \inc_{H}(\rho(v)) }) 
    \end{align*}
    To show that $\colr^l(G)(v) = \colr^l(H)(\rho(v))$ we hence need to show:
    \begin{compactenum}[(i)]
    \item $\colr^{l-1}(G)(v) = \colr^{l-1}(H)(\rho(v))$; and
    \item
      $\mset{ \colr^{l-1}(G)(w) \mid w \in \inc_G(v) }) = \mset{
        \colr^{l-1}(H)(w') \mid w' \in \inc_{H}(\rho(v))}$.
    \end{compactenum}
    Equality (i) follows directly from the induction hypothesis. To show (ii), we reason as follows:
    \begin{align*}
      \mset{ & \colr^{l-1}(H)(w')  \mid w'  \in \inc_H(\rho(v)) }) \\
             & =     \mset{ \colr^{l-1}(H)(w')  \mid w' \in \mset{ \rho(w) \mid w \in \inc_G(\rho(v)) }}) \\
             & =     \mset{ \colr^{l-1}(H)(\rho(w))  \mid w \in \inc_G(\rho(v)) } \\
             & =     \mset{ \colr^{l-1}(G)(w)  \mid w \in \inc_G(\rho(v)) } \\
             & =   \mset{ \colr^{l-1}(G)(w)  \mid w \in \inc_G(v) }
    \end{align*}
    The first equality is due to observation (O3); the second is an elementary equality of multiset comprehensions;  the third is by induction hypothesis; and the fourth is because $\colr^l(G)(v) = \colr^l(G)(\rho(v)$ by Observation (O1). In particular, the next-to-last line is exactly the second component of the color $\colr^l(G)(\rho(v)$ and the last line  is the second component of $\colr^l(G)(v)$, which must hence be equal.
  \end{proof}
\end{toappendix}

It hence follows from Proposition~\ref{prop:indistinguishable-d} and
Corollary~\ref{cor:indistinguishable-inf} that if $\topol$ consists of \gnns of
at most $d \in \natsinf$ layers, then $(G,v) \indist{\topol} (G/\rho, \rho(v))$.

\paragraph*{Discussion} Example~\ref{ex:reductions} shows that the choice of $d$-substitution determines the reduct $G/\rho$ that we obtain. In particular, the example shows that distinct substitutions can yield distinct, \emph{non-isomorphic} reducts.  This behavior is unavoidable, unless $d$ is the stable coloring number of $G$. Indeed, we are able to show:

\begin{toappendix}
  \begin{lemma}
    \label{lem:isom-agrees-with-substitution}
    Let $H_1 = G/\rho_1$ and $H_2 = G/\rho_2$ be two $d$-reducts of $G$ and let $f$ be an isomorphism from $H_1$ to $H_2$. Then $f$ agrees with $\rho_2$: $f(u) = \rho_2(v)$ for all $u \in H_1$.
  \end{lemma}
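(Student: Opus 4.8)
The plan is to combine two ingredients: the fact that graph isomorphisms preserve color refinement, and Proposition~\ref{prop:compression-same-colr}, which says that reducing by a $d$-substitution does not alter the $d$-step color refinement colors of the surviving nodes. The first ingredient transports a color refinement color from $H_1$ to $H_2$ along $f$; the second translates color refinement colors in $H_1$ and in $H_2$ back into color refinement colors in $G$. Chaining these forces each $u \in H_1$ and its image $f(u)$ to lie in the same color refinement class of $G$, and since a $d$-substitution depends only on that class, $f(u)$ must be exactly the representative $\rho_2$ picks for it. (I read the conclusion as: $f(u) = \rho_2(u)$ for every $u \in H_1$, equivalently $f(u) = \rho_2(v)$ whenever $u \in \class{G,v}{d}$; these coincide since $u \in \class{G,u}{d}$ and $\rho_2$ is constant on classes.)

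In detail, I would first record that for any node $u$ of $H_i = G/\rho_i$ we have $\rho_i(u) = u$: since $u$ is a chosen representative, $u = \rho_i(\class{G,x}{d})$ for some $x \in G$, hence $u \in \class{G,x}{d}$, hence $\class{G,u}{d} = \class{G,x}{d}$, hence $\rho_i(u) = u$. (This is observation (O2) in the proof of Proposition~\ref{prop:compression-same-colr-bounded-rounds}; note in particular $V(H_i) \subseteq V(G)$, so $\colr^d(G)$ may be evaluated at nodes of $H_i$.) Next, fix $u \in H_1$ and run the chain
\[
  \colr^d(G)(u) \;=\; \colr^d(H_1)(u) \;=\; \colr^d(H_2)(f(u)) \;=\; \colr^d(G)(f(u)),
\]
where the first equality is Proposition~\ref{prop:compression-same-colr} applied to $\rho_1$ at the node $u$ (using $\rho_1(u)=u$), the middle equality is isomorphism-invariance of color refinement, and the last equality is Proposition~\ref{prop:compression-same-colr} applied to $\rho_2$ at the node $f(u)$ (using $\rho_2(f(u))=f(u)$ since $f(u) \in H_2$). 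Hence $f(u) \in \class{G,u}{d}$, so $\rho_2(f(u)) = \rho_2(u)$; combined with $\rho_2(f(u)) = f(u)$ this gives $f(u) = \rho_2(u)$. As $u \in H_1$ was arbitrary, $f$ agrees with $\rho_2$.

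The remaining gap is the claim that an isomorphism $f \colon H_1 \to H_2$ satisfies $\colr^d(H_1)(u) = \colr^d(H_2)(f(u))$ for all $u$ and all $d \in \natsinf$; I would dispatch this by the standard induction. For $\colr^0$ it holds because $f$ preserves node colors. For the inductive step, $f$ is a multiplicity-preserving bijection on edges, so $\mset{f(w) \mid w \in \inc_{H_1}(u)} = \inc_{H_2}(f(u))$, and combining this reindexing with the induction hypothesis equates the multiset component of $\colr^{d}$. For $d = \infty$ one extra remark is needed: the stable coloring number is an isomorphism invariant (the partition induced by each $\colr^k$ is carried by $f$), so $H_1$ and $H_2$ share a common stable number $c$ and $\colr^\infty(H_i) = \colr^c(H_i)$, reducing to the finite case.

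I expect the only real ``obstacle'' here is bookkeeping: keeping the three ambient graphs $G, H_1, H_2$ apart, remembering that the nodes of each $H_i$ live inside $G$, and using $\rho_i(u) = u$ to align the two sides of the chain. The substantive work---that reduction preserves $\colr^d$---is already carried by Proposition~\ref{prop:compression-same-colr}; the isomorphism-invariance of color refinement is routine, with the $d = \infty$ case the only point needing a sentence of care.
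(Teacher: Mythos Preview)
Your proposal is correct and follows essentially the same approach as the paper: both arguments chain Proposition~\ref{prop:compression-same-colr} on either side of the isomorphism-invariance of $\colr^d$ (using $\rho_i(u)=u$ for $u\in H_i$) to obtain $f(u)\in\class{G,u}{d}$, and then conclude $f(u)=\rho_2(u)$ from the fact that $H_2$ contains a unique representative per class. Your write-up is in fact slightly more thorough, since you spell out the induction for isomorphism-invariance (including the $d=\infty$ case) and correctly flag the typo in the statement.
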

  \begin{proof}
    It is straightforward to verify by induction on $d$ that $\colr^d$ is invariant under isomorphism, in the sense that $\colr^d(H_1)(u) = \colr^d(H_2)(f(u))$ for all $u \in H_1$. Therefore, for all $u \in H_1$
\begin{align*}
  \colr^d(G)(u) & = \colr^d(H_1)(\rho_1(u)) \\
                & = \colr^d(H_1)(u) \\
                & = \colr^d(H_2)(f(u)) \\
                & = \colr^d(H_2)(\rho_2(f(u))) \\
                & = \colr^d(G)(f(u))
\end{align*}
The first equality is by Proposition~\ref{prop:compression-same-colr}; the second by the fact that $\rho_1(u) = u$ for all $u \in H_1$%
; the third by invariance under isomorphisms; the fourth by the fact that $f(u) \in H_2$ and $\rho_2(u') = u'$ for every $u' \in H_2$
; and the last again by Proposition~\ref{prop:compression-same-colr}.

In other words, $f(u) \in \class{G,u}{d}$, for every $u \in H_1$. Then, because $f(u) \in H_2$ and $H_2$ contains only one node for each color refinement class in $\{ \class{G,u}{d} \mid u \in G\}$, it follows that $\rho_2(u) = f(u)$, for every $u \in H_1$ (including $v$).
  \end{proof}
\end{toappendix}

\begin{propositionrep}
  \label{prop:non-isomorphic-reducts}
  There is a single $d$-reduct of a graph $G$
  up to isomorphism if and only if $d$
  is greater than or equal to the stable coloring number of $G$.
\end{propositionrep}
\begin{proof}
\underline{(If.)}  For the if-implication, assume that $c$ is the stable coloring number of $G$ and let $d \geq c$, $d \in \nats$. Then, for all nodes $v \in G$ we have $\class{G,v}{d} = \class{G,v}{d+1}$. Let $H_1 = G/\rho_1$ and $H_2 = G/\rho_2$ be two $d$-reducts of $G$. Let $V_1 = \{ \rho_1(v) \mid v \in G\}$ be the set of nodes in $H_1$ and $V_2 = \{\rho_2(v) \mid v \in G\}$ be the set of nodes in $H_2$. It is clear that $V_1$ and $V_2$ are of the same cardinality, as they have one node for each color refinement class in $\{ \class{G,v}{d} \mid v \in G\}$.  It is furthermore straightforward to obtain that  $\rho_1(v) = v$ for all $v \in V_1$, and similarly $\rho_2(v) = v$ for all $v \in V_2$.

We claim that the function $f = \restr{\rho_2}{V_1}$ is an isomorphism from $H_1$ to $H_2$.

(1) This function is injective: assume that $v,w \in V_1$ and $\rho_2(v) = \rho_2(w)$. Since, by defintion of $d$-substitutions, we have $\rho_2(v) \in  \class{G,v}{d}$ and $\rho_2(w) \in \class{G,w}{d}$ it follows that  $\class{G,v}{d} = \class{G,w}{d}$. Because $v,w \in V_1$ we have $\rho_1(v) = v$ and $\rho_1(w) = w$. As such
  $v = \rho_1(v) = \rho_1(\class{G,v}{d}) = \rho_1(\class{G,w}{d}) = w$
  as desired.

(2) Since $f$ is an injective function from $V_1$ to $V_2$ and $V_1$ and $V_2$ are finite sets of the same cardinality, it is also surjective. Hence $f$ is a bijection between $V_1$ and $V_2$.

(3) It remains to show that for all $v,w \in V_1$ we have
  \[ H_1(v \to w) = H_2\left( \rho_2(v) \to \rho_2(w) \right).\]
Fix $v, w \in V_1$ arbitrarily. By definition of $\rho_2$ we have $\rho_2(w) \in \class{G,w}{d}$ and because $\class{G,w}{d} = \class{G,w}{d+1}$ ($d$ is larger than the stable coloring number of $G$) it follows that $\rho_2(w) \in \class{G,w}{d+1}$. Therefore,  $\colr^{d+1}(G)(w) = \colr^{d+1}(G)(\rho_2(w))$. In particular,
\[ \mset{ \colr^d(G)(v') \mid v' \in \inc_G(w) } = \mset{ \colr^d(G)(v') \mid v' \in
    \inc_G(\rho_2(w)) } \]
Hence, for any $v \in G$ we also have
\begin{multline}
  \tag{$\star$}
  \mset{ \colr^d(G)(v') \mid v' \in \inc_G(w), v' \in \class{G,v}{d} } \\= \mset{ \colr^d(G)(v') \mid v' \in
    \inc_G(\rho_2(w)), v' \in \class{G,v}{d}}
\end{multline}
Let us denote the total multplicity of a finite multiset $M$ by $\#M$, i.e., $\#M = \sum_{x} M(x)$.
Then we reason as follows.
\begin{align*}
  H_1(v \to w) & = \sum_{v' \in \class{G,v}{d}} G(v' \to w) \\
  & = \# \mset{v' \mid v' \in \inc_G(w), v' \in \class{G,v}{d} } \\
  & = \# \mset{\colr^{d}(G,v') \mid v' \in \inc_G(w), v' \in \class{G,v}{d} }\\
   & =\# \mset{\colr^{d}(G,v') \mid v' \in \inc_G(\rho_2(w)), v' \in \class{G,v}{d}  }  \\
   & = \# \mset{\colr^{d}(G,v') \mid v' \in \inc_G(\rho_2(w)), v' \in \class{G,\rho_2(v)}{d}  } | \\
   & = \# \mset{v' \mid v' \in \inc_G(\rho_2(w)), v' \in \class{G,\rho_2(v)}{d}}  | \\
               & = \sum_{v' \in \class{G,\rho_2(v)}{d}} G(v' \to \rho_2(w)) \\
  & = H_2(\rho_2(v) \to \rho_2(w))
\end{align*}
The first equality is by definition of $H_1$; the second is by rewriting the sum in multiset notation; the third because we are only interested in the total multiplicity of the multiset and not its elements; the fourth by $(\star$); the fifth because $\class{G,v}{d} = \class{G,\rho(v)}{d}$ by definition of $d$-substitutions; the sixth again because we care only about multiplicity and not the actual elements; the seventh by rewriting the multiset notation into a sum; and the last by definition of $H_2$.

\medskip
\noindent \underline{(Only if).} Assume that all $d$-reducts of $G$ are isomorphic. We need to show that for all $v \in G$ we have $\class{G,v}{d} = \class{G,v}{d+1}$. The containment $\class{G,v}{d} \supseteq \class{G,v}{d+1}$ trivially holds by definition of $\colr$. For the other containment, assume that $w \in \class{G,v}{d}$, i.e.,  $\colr^d(G)(v) = \colr^d(G)(w)$. We will show that also $\colr^{d+1}(G)(v) = \colr^{d+1}(G)(w)$, which is equivalent to saying that   $w \in \class{G,v}{d+1}$.

Recall that $\colr^{d+1}(G)(v)$ and $\colr^{d+1}(G)(w)$ are pairs by definition.
Since $\colr^d(G)(v) = \colr^d(G)(w)$ by assumption, the first components of of
these pairs are certainly equal. To prove
$\colr^{d+1}(G)(v) = \colr^{d+1}(G)(w)$, we hence only need to show that also
their second components are equal, i.e., that
\begin{equation}
  \label{eq:isom-p2}
  \tag{$\ddag$}
  \underbrace{\mset{ \colr^d(G)(u) \mid u \in \inc_G(v) }}_{=: M_1} = \underbrace{ \mset{ \colr^d(G)(u) \mid u \in
    \inc_G(w) }}_{=: M_2}
\end{equation}
To obtain this equality, consider two $d$-substitutions
$\rho_1$ and $\rho_2$ s.t. 
\begin{align*}
  \rho_1&\colon \class{G,v}{d} \mapsto v & 
  \rho_2&\colon \class{G,v}{d} \mapsto w 
\end{align*}
 In other words, $\rho_1(v)=\rho_1(w) = v$ and $\rho_2(v) = \rho_2(w) = w$. By assumption, $H_1 = G/\rho_1$ and $H_2 = G/\rho_2$ are isomorphic. Let $f$ be an isomorphism from $H_1$ to $H_2$. By Lemma~\ref{lem:isom-agrees-with-substitution} $f$ agrees with $\rho_2$: $f(u) = \rho_2(u)$ for all $u \in H_1$ (including $v$). 
Furthermore, by definition of isomorphism, for all $u \in H_1$ we have $H_1(u \to v) = H_2(f(u) \to f(v)) = H_2(\rho_2(u) \to \rho_2(v)) = H_2(\rho_2(u) \to w)$. 

To show that $M_1 = M_2$ we show that for every $x$, $M_1(x) \leq M_2(x)$ and $M_2(x) \leq M_1(x)$. We only illustrate the reasoning for $M_1(x) \leq M_2(x)$, the converse direction is similar. Consider an element $x$ in $M_1$, and let $m = M(x)$ be its multiplicity. Then there exists some $u \in \inc_G(v)$ such that $x = \colr^d(G)(u)$, and $x$ occurs as many times in $M_1$ as there are elements in $M'_1 := \mset{u' \mid u' \in \inc_G(v), u' \in \class{G,u}{d}}$, i.e., $m_1 = \#M'_1$.  Because $\class{G,u}{d} = \class{G,\rho_1(u)}{d}$ for all $u \in G$, it follows that
\begin{align*}
  m_1 & = \#\mset{u' \mid u' \in \inc_G(v), u' \in \class{G,u}{d}} \\
      & = \#\mset{u' \mid u' \in \inc_G(v), u' \in \class{G,\rho_1(u)}{d}} \\
      & = H_1(\rho_1(u) \to v) \\
      & = H_2(\rho_2(\rho_1(u)) \to w) \\
      & = \#\mset{u' \mid u' \in \inc_G(w), u' \in \class{G,\rho_2(\rho_1(u))}{d}} \\
  & = \#\mset{u' \mid u' \in \inc_G(w), u' \in \class{G,u}{d}} 
\end{align*}
The last equality is because $\class{G, \rho_2(\rho_1(u))}{d} = \class{G,u}{d}$ by definition of $d$-reducts. Notice that all elements $u'$ in the multiset on the last line will create one copy of $\colr^d(G)(u') = \colr^d(G)(u)$ in $M_2$.  As such, $x = \colr^d(G)(u)$ occurs at least $m$ times in $M_2$, as desired.
\end{proof}

A direct consequence of having  different non-isomorphic reducts when $d$ is less than the coloring number is that some of these reducts may be  smaller than others. In Example~\ref{ex:reductions}, $G/\rho_1$ has $3$ nodes and $4$ edges  while $G/\rho_2$ has $3$ nodes and only $3$ edges.  We may always obtain  a $d$-reduct of minimal size as follows. For $d \in \natsinf$, define the \emph{$d$-incidence} of a node $w \in G$, denoted $\incidence^d_G(w)$, to be the number of color refinement classes in $\{ \class{G,v}{d} \mid v \in G\}$ that contain an incoming neighbor of $w$. That is, the $d$-incidence of $w$ is the number of distinct classes in $\{ \class{G,u}{d} \mid u \in \inc_G(w) \}$.
The following proposition shows that we obtain a $d$-reduct of minimal size by 
 by choosing a $d$-substitution that maps color refinement classes to nodes of minimal $d$-incidence.

\begin{propositionrep}
  \label{prop:minimal-reduct}
  Let $G$ be a graph, let $d \in \natsinf$ and let $\rho$ be a $d$-substitution such that 
\[ \incidence^d_G(\rho(v)) = \min_{v' \in \class{G,v}{d}} \incidence^d_G(v') ,\] for every node $v \in G$. Then the size of $G/\rho$ is minimal among all $d$-reducts of $G$.
\end{propositionrep}
\begin{proof}
  Since $\infty$-substitutions are simply $d$-substitutions with $d \in \nats$ the stable coloring number of $G$, it suffices to show the statement for all $d \in \nats$.

  Fix $d \in \nats$,  let $C = \{ \class{G,v}{d} \mid v \in G\}$ be the color refinement classes of $G$ of depth $d$, and let $\rho$ be a $d$-substitution such that \[ \incidence^d_G(\rho(c)) = \min_{v' \in c} \incidence^d_G(v) ,\] for every refinement class $c \in C$. Let $H = G/\rho$. Furthermore, let $\mu$ be another $d$-substitution and let $U = G/\mu$. We will show that  $H$, viewed as a simple graph by ignoring edge multiplicities, has no more edges than $U$.

Note that for each class $c \in C$ there is exactly one corresponding node in $H$ (namely, $\rho(c)$) and one corresponding node in $U$ (namely $\mu(c)$). We claim that, for every $c \in C$, the indegree\footnote{That is, the number of nodes $w$ in $H$ having an outgoing edge to $\rho(c)$.} of $\rho(c)$  in $H$ is at most the indegree of $\mu(c)$ in $U$. Since the total number of simple edges in $H$ equals $\sum_{c \in C} \textsf{indegree}_H(\rho(c))$ and the total number of simple edges in $U$ similarly equals $\sum_{c \in C}\textsf{indegree}_U(\mu(c))$, it follows that $H$, viewed as a simple graph, has no more edges than $U$.

  To prove the claim, let $c \in C$. There is an edge from $\rho(c') \to \rho(c)$ in $H$ if and only if $\class{G,\rho(c')}{d} \cap \inc_G(\rho(c))$ is non-empty. Hence, since $\class{G,\rho(c')}{d} = c'$, the indegree of $\rho(c)$ in $H$ is exactly $\incidence^d_G(\rho(c))$. Similar reasoning shows that the indegree of $\mu(c)$ in $U$ equals $\incidence^d_G(\mu(c))$. As such,
  \begin{align*}
    \textsf{indegree}_H(\rho(c)) & = \incidence^d_G(\rho(c)) \\
                                 & = \min_{v' \in c} \incidence^d_G(v') \\
                                 & \leq \incidence^d_G(\mu(c)) \\
    & = \textsf{indegree}_U(\mu(c)) 
  \end{align*}
  The inequality in the third line is because $\mu(c) \in c$ by definition of $d$-substitution.
\end{proof}
When we report the size of $d$-reducts in our experiments
(Section~\ref{sec:evaluation}), we always report the minimal size among all
$d$-reducts.

\paragraph*{Discussion} Example~\ref{ex:reductions} illustrates that, depending
on the substitution used, the result of reducing a simple graph may be a
multigraph. In the literature, however, \gnns and color refinement are normally
defined to operate on simple graphs.  One may therefore wonder whether it is
possible to define a different notion of reduction that always yields a simple
graph when executed on simple, instead of a multigraph as we propose here. To
answer this question, consider the tree-shaped graph $G$ in
Figure~\ref{fig:multigraph-reduction}(left), whose root $v$ has $m$ $b$-colored
children, each having $n$ $c$-labeled children. It is straightforward to see
that when $d\geq 2$, any simple graph $H$ that has a node $w$ such that
$\colr^d(G)(v) = \colr^d(H)(w)$ must be such that $w$ has $m$ $b$-colored
children in $H$, each having $n$ $c$-labeled children.  As such, because $H$ is
simple, it must be of size at least as large as $G$. Instead, by moving to
multigraphs we are able to compress this ``regular'' structure in $G$ by only
three nodes, as show in Figure~\ref{fig:multigraph-reduction}(right). This
illustrates that for the purpose of compression we naturally need to move to
multigraphs.

\begin{figure}
  \begin{center}
    \small
  \begin{tikzpicture}[->,>=stealth, semithick, node distance=0.5cm]
    \tikzstyle{bnode}=[draw=black,text=black,inner sep=1pt, minimum size=5mm, circle]
    \tikzstyle{wnode}=[text=gray,inner sep=2pt]

    \node[bnode] (v) at (0.0cm, 0.0cm) {$v$};
    \node[bnode] (b_1) at (-1cm,-1cm) {$b_1$};
    \node (bdots) at (0cm,-1cm) {\dots};
    \node[bnode] (b_m) at (1cm,-1cm) {$b_m$};

    \node[bnode] (c_11) at (-1.5cm,-2cm) {$c^1_1$};
    \node  at (-1.0cm,-2cm) {\dots};
    \node[bnode] (c_1n) at (-0.5cm,-2cm) {$c^1_n$};

    \node[bnode] (c_m1) at (0.5cm,-2cm) {$c^m_1$};
    \node  at (1.0cm,-2cm) {\dots};
    \node[bnode] (c_mn) at (1.5cm,-2cm) {$c^m_n$};

    \path[->] (c_m1) edge (b_m);
    \path[->] (c_mn) edge (b_m);
    \path[->] (c_11) edge (b_1);
    \path[->] (c_1n) edge (b_1);
    \path[->] (b_1) edge (v);
    \path[->] (b_m) edge (v);

    \node[bnode] (v) at (3.0cm, 0.0cm) {$v$};
    \node[bnode] (b) at (3.0cm, -1.0cm) {$b_1$};
    \node[bnode] (c) at (3.0cm, -2.0cm) {$c^1_1$};

    \path[->] (c) edge  node[left,wnode] {$n$} (b);
    \path[->] (b) edge  node[left,wnode] {$m$} (v);

  \end{tikzpicture}
  
  \caption{Reduction of a tree-shaped simple graph (left) into a small multigraph (right).}
  \label{fig:multigraph-reduction}
\end{center}
\end{figure}
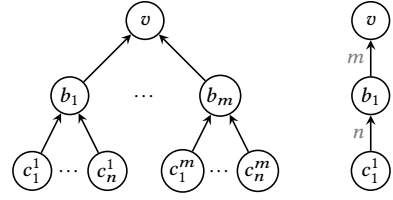

\subsection{Problem compression}
\label{sec:problem-compr}

We next combine the insights from Sections~\ref{sec:indist} and
\ref{sec:reducts} into a method for compressing learning problems.

Let $\prob = (G, \train, \loss, \topol)$ be a learning problem. Let $\rho$ be a $d$-substitution. We then define the
reductions $\train/\rho$ and $\loss/\rho$ of  $\train$ and $\loss$ by $\rho$
as follows:
\begin{align*}
  \train/\rho & = \{ \rho(v) \mid v \in \train \} \\
  \loss/\rho(v, c) & = \sum_{w \in \train \cap \class{G,v}{d}} \loss(w,c) 
\end{align*} 
We denote by $\prob/\rho$ the learning problem $(G/\rho, \train/\rho, \loss/\rho, \topol)$.

\begin{theoremrep}
  \label{thm:compression}
  Let $\prob$ be a learning problem with hypothesis space $\topol$ and assume
  that $d \in \natsinf$ is such that every \gnn in $\topol$ has at most $d$
  layers. (In particular, $d = \infty$ if there is no bound on the number of
  layers in $\topol$.) Then $\prob \equiv \prob/\rho$ for every $d$-substitution
  $\rho$.
\end{theoremrep}
\begin{proof}
  Let $\prob/\rho = (G', \train', \loss')$, so $G'= G/\rho$;
  $\train'= \train/\rho$, and $\loss'= \loss/\rho$. Consider an arbitrary but
  fixed \gnn $\L \in \topol$.  We need to show that
  $\loss(\L(G),\train) = \loss'(\L(G'),\train')$. To that end, first observe that for every node
  $v \in G'$ and every node $w \in \class{G,v}{d}$ we have $\rho(w) = v$. Consequently,  
  \begin{align*}
    \colr^d(G',v) = \colr^d(G', \rho(w)) = \colr^d(G/\rho, \rho(w)) = \colr^d(G,w),
  \end{align*}
  Here, the last equality follows from Proposition~\ref{prop:compression-same-colr}. Hence, by Proposition~\ref{prop:indistinguishable-d} (or Corollary~\ref{cor:indistinguishable-inf} when $d = \infty$), $(G',v) \indist{\topol} (G, w)$ and such $\L(G')(v) = \L(G)(w)$ for every $v \in G'$ and  $w \in \class{G,v}{d}$.

  Using this observation we now reason as follows.
  \begin{align*}
    \loss'(\L(G'),\train') & = \sum_{v \in \train'} \loss'(v, \L(G')(v)) \\
                   & = \sum_{v \in \train'} \sum_{w \in \train \cap \class{G,v}{D}} \loss(w, \L(G')(v)) \\
                & = \sum_{v \in \train'} \sum_{w \in \train \cap \class{G,v}{D}} \loss(w, \L(G)(w)) \\
                   & =  \sum_{w\in \train} \loss(w, \L(G)(w)) \\
    & = \loss(\L(G),\train)
  \end{align*}
  The second  equality is by definition of $\loss'$. The third equality
  follows from our observation. The fourth equality is because $\train = \bigcup_{v \in \train'} \class{G,v}{d} \cap T$. 
\end{proof}

\subsection{Graded Color Refinement}
\label{sec:graded}

So far, we have focused on compression based on the \emph{depth} of the \gnns
present in the hypothesis space, where the depth of a \gnn equals its number of
layers and the depth of a hypothesis space $\topol$ is the maximum depth of any
of its \gnns, or $\infty$ if this maximum is unbounded. In particular, the notion of $d$-reduct hinges on this parameter $d$ through the calculated color refinement classes $\class{G,v}{d}$. 

As we have already observed in Section~\ref{sec:indist}, however,
when $\topol$ does not contain all $d$-layer \gnns the color refinement classes
$\class{G,v}{d}$ that we base compression on may contain too few nodes compared
to $\class{G,v}{\topol}$ and may therefore not provide enough opportunity for
compression.

In such cases, it may be beneficial to move to more fine-grained notions of color refinement that better capture $\class{G,v}{\topol}$. In this section we propose one such fine-grained notion, which takes into account  the ``width'' of $\topol$. We say that \emph{\gnn $\L$ has width $c \in \natsinf$} if for every layer in $\L$ the aggregation function $\agg$ is such that $\agg(M) = \agg(\mrestr{M}{c})$ for every multiset $M$. In other words: $\agg$ can only ``count'' up to $c$ copies of a neighbor's color. When $c = \infty$ there is no limit on the count. The width of $\topol$ is then the maximum width of any of its \gnns, or $\infty$  if this maximum is unbounded. Hypothesis spaces of bounded width clearly do not contain \emph{all}  $d$-layer \gnns, for any depth $d$.

While we know of no practical \gnn architectures that explicitly limit the
width, there are influential learning \emph{algorithms} that implicitly limit the width. For
example, to speed up learning, GraphSAGE~\cite{hamiltonInductiveRepresentationLearning2017} can be parametrized by a
hyperparameter $c$. When $c < \infty$ GraphSAGE does not consider all of a
node's neighbors in each layer, but only a random sample of at most $c$ neighbors of that node. Such sampling effectively limits the width of $\topol$.

We next define a variant of color refinement, called \emph{graded} color
refinement, that takes width into account. It may lead to larger color
refinement classes, and therefore potentially also to better compression. \begin{definition}
  \label{def:graded-color-refinement}
  Let $c \in \natsinf$.
  The (one-step) \emph{$c$-graded color refinement} of colored graph
  $G=(V,E,g)$, denoted $\colr_c(G)$, is the colored graph $G'= (V,E,g')$ with 
\[ g'\colon v \mapsto \big( G(v), \mrestr{\mset{ G(w) \mid w \in
    \inc_G(v)}}{c}\big).\]
\end{definition}
Note that $\colr_{\infty}$ equals normal, non-graded, color
refinement. We also remark that $\colr_c$ with $c = 1$ corresponds to standard
\emph{bisimulation} on graphs~\cite{baierPrinciplesModelChecking2008}. We denote by $\colr_c^d(G)$  the result of applying $d$ refinement steps of $c$-graded color refinement, so $\colr_c^0(G) = G$ and $\colr_c^{d+1} = \colr(\colr_c^d(G))$. We denote by $\class{G,v}{c,d}$ the set of all nodes in $G$ that receive the same color after $d$ steps of $c$-graded color refinement. The concept of $(c,d)$-substitution is then defined analogously to $d$-substitution, as mappings from the $\class{G,v}{c,d}$ color refinement classes to nodes in these classes. The reduction of a graph $G$ by a $(c,d)$ substitution is similar to reductions by $d$-substitution except that  the edge multiplicity $E(v \to w)$ for $v,w$ in the reduction is now limited to $c$, i.e.,
\[ E(v\to w) = \sum_{v'\in \class{G,v}{c,d}} \mrestr{G(v' \to w)}{c}. \]
With these definitions, we can show that for any $\prob$ with hypothesis space $\topol$ of width $c$ and depth $d$ we have $\prob \equiv \prob/\rho$ for any $(c,d)$-reduction $\rho$. The full development is omitted due to lack of space.

We note that $\class{G,v}{d} \subseteq \class{G,v}{c,d}$, always. Graded color
refinement hence potentially leads to better compression, but only applies to
problems with hypothesis spaces of width $c$.  We will empirically contrast the
compression ratio obtained by $(c,d)$-reducts to those obtained by $d$-reducts
(i.e., where $c = \infty$) in Section~\ref{sec:evaluation}. There, we also study
the effect of $c$ on learning accuracy for problems whose width is not bounded.

\section{Evaluation}
\label{sec:evaluation}
In this section, we empirically evaluate the compression methodology described in Section~\ref{sec:methodology}. We first give insights into the compression ratios that can be obtained on real-world graphs in Section~\ref{sec:compression-evaluation}. Subsequently, we evaluate the  learning on compressed graphs versus learning on the original graphs.

\subsection{Compression}
\label{sec:compression-evaluation}

\newcommand{\arxiv}{\textsf{ogbn-arxiv}\xspace}
\newcommand{\arxivinv}{\textsf{ogbn-arxiv-inv}\xspace} 
\newcommand{\arxivun}{\textsf{ogbn-arxiv-undirected}\xspace}
\newcommand{\products}{\textsf{ogbn-products}\xspace}
\newcommand{\roadnetca}{\textsf{snap-roadnet-ca}\xspace}
\newcommand{\roadnetpa}{\textsf{snap-roadnet-pa}\xspace}
\newcommand{\roadnettx}{\textsf{snap-roadnet-tx}\xspace}
\newcommand{\socpokec}{\textsf{snap-soc-pokec}\xspace}

We consider the real-world graphs listed in
Table~\ref{tab:compression-datasets}. The $\textsf{ogbn-*}$
graphs are from the Open Graph Benchmark (OGB), a collection of
realistic, large-scale, and diverse benchmark datasets for
machine learning on graphs~\cite{huOpenGraphBenchmark2020}, where
they belong to the OGB node property prediction benchmark (OGBN).
Specifically, \arxiv is a network of academic papers, where edge
$x \to y$ indicates that $x$ cites $y$. Graph \arxivinv is the
inverted version of \arxiv; it is obtained by reversing edges, so
that edge $x \gets y$ indicates that $y$ was cited by $x$. Graph
\arxivun is the undirected version of \arxiv; it is obtained by
adding inverse edges to \arxiv. Next, there is \products, an undirected graph
representing an Amazon product co-purchasing network.  The other
datasets are from the Stanford Large Network Dataset Collection
(SNAP)~\cite{leskovecSNAPDatasetsStanford}. Here, \roadnetca,
\roadnetpa, and \roadnettx are undirected graphs representing
road networks, and \socpokec is a directed graph containing an
online social network.

\begin{table}[tbp]
  \small
  \begin{tabular}[h]{@{}llrr@{}}
    \toprule Graph & Type      & \#Nodes    & \#Edges    \\
    \midrule
    \csvreader[
    late after line=                                     \\,
    late after last line=                                \\\bottomrule,
    ]
    {data/compression_dataset_descriptions.csv}{}%
    {\csvcoli      & \csvcolii & \csvcoliii & \csvcoliv} %
  \end{tabular}
  \vspace{1ex}
  \caption{Datasets}
  \label{tab:compression-datasets}
  \vspace{-2ex}
\end{table}

The input colors used for learning on these graphs typically depend on the application. To get an understanding of the maximum amount of compression that we can obtain independent of the target application, we assign a shared single color $c$ to each node, in all graphs. As such, the color refinement classes $\class{G,v}{d}$ that we obtain are maximal, in the sense that for any other colorored graph $G'$ whose topology equals $G$ we will have $\class{G',v}{d} \subseteq \class{G,v}{d}$. In this setting, we hence reach maximal compression.

We consider three versions of \arxiv to get an indication of how edge directionality impacts compression. Recall that \gnn layers propagate color information following the direction of edges. Hence, because in \arxiv an edge $x \to y$ indicates that $x$ cites $y$, color information flows from citing papers to cited papers. In \arxivinv, by contrast, it flows from cited papers to citing papers while in \arxivun it flows in both directions. The direction in which the information can flow impacts the number of color refinement classes that we obtain, and hence the compression, as we will see.

\begin{figure*}[tpb]
  \small
  \pgfplotsset{
    width=5.2cm,
    every axis plot/.append style={thick},
    every axis y label/.style={
        at={(ticklabel cs:0.5)},rotate=90,anchor=near ticklabel,
      },
    every axis x label/.style={
        at={(ticklabel cs:0.5)},anchor=near ticklabel,
      },
    ymin=-0.05, ymax=1.03,
    ymajorgrids=true,
    xmajorgrids=true,
    legend columns=-1,
    legend style={draw=none},
  }
  \pgfplotstableread{data/compression-ungraded-nodes.tsv}\ungradnodes

  \begin{tikzpicture}[baseline]
    \begin{axis}[
        title style={align=center},
        title={$(a)$ $\%$ of nodes in $d$-reduct \\ relative to original},
        xlabel={Refinement depth $d$},
        xmin=-0.0, xmax=7.09,
        cycle list name=exotic,
        legend columns=4, 
        legend to name=namedlegend,
      ]
      \addplot
      table [x=round,y=unlab-ogbn-arxiv]  {\ungradnodes};
      \addlegendentry{\arxiv};

      \addplot
      table [x=round,y=unlab-ogbn-arxiv-inv]
        {\ungradnodes};
      \addlegendentry{\arxivinv};

      \addplot
      table [x=round,y=unlab-ogbn-arxiv-undirected]
        {\ungradnodes};
      \addlegendentry{\arxivun};

      \addplot
      table [x=round,y=unlab-ogbn-products] {\ungradnodes};
      \addlegendentry{\products};

      \addplot
      table [x=round,y=unlab-snap-roadnet-ca] {\ungradnodes};
      \addlegendentry{\roadnetca};

      \addplot
      table [x=round,y=unlab-snap-roadnet-pa] {\ungradnodes};
      \addlegendentry{\roadnetpa};

      \addplot
      table [x=round,y=unlab-snap-roadnet-tx] {\ungradnodes};
      \addlegendentry{\roadnettx};

      \addplot
      table [x=round,y=unlab-snap-soc-pokec] {\ungradnodes};
      \addlegendentry{\socpokec};
    \end{axis}
  \end{tikzpicture}
  \hskip 4pt 
  \begin{tikzpicture}[baseline]
    \begin{axis}[
        title style={align=center},
        title={$(b)$ $\%$ of edges in $d$-reduct \\ relative to original},
        xlabel={Refinement depth $d$},
        xmin=-0.0, xmax=7.09,
        cycle list name=exotic,
      ]
      \pgfplotstableread{data/compression-ungraded-edges.tsv}\ungradedges
      \addplot
      table [x=round,y=unlab-ogbn-arxiv]  {\ungradedges};

      \addplot
      table [x=round,y=unlab-ogbn-arxiv-inv]
        {\ungradedges};

      \addplot
      table [x=round,y=unlab-ogbn-arxiv-undirected]
        {\ungradedges};

      \addplot
      table [x=round,y=unlab-ogbn-products] {\ungradedges};

      \addplot
      table [x=round,y=unlab-snap-roadnet-ca] {\ungradedges};

      \addplot
      table [x=round,y=unlab-snap-roadnet-pa] {\ungradedges};

      \addplot
      table [x=round,y=unlab-snap-roadnet-tx] {\ungradedges};

      \addplot
      table [x=round,y=unlab-snap-soc-pokec] {\ungradedges};
    \end{axis}
  \end{tikzpicture}
  \hskip 4pt 
  \begin{tikzpicture}[baseline]
    \begin{axis}[
        title style={align=center},
        title={$(c)$ $\%$ of nodes in $(c,3)$ reduct \\ relative to $3$-reduct},
        xlabel={Grade $c$},
        xmin=1.0, xmax=5.09,
        cycle list name=exotic,
      ]
      \pgfplotstableread{data/compression-graded-nodes-rel.tsv}\gradednodes
      \addplot
      table [x=grade,y=unlab-ogbn-arxiv]  {\gradednodes};

      \addplot
      table [x=grade,y=unlab-ogbn-arxiv-inv]
        {\gradednodes};

      \addplot
      table [x=grade,y=unlab-ogbn-arxiv-undirected]
        {\gradednodes};

      \addplot
      table [x=grade,y=unlab-ogbn-products] {\gradednodes};

      \addplot
      table [x=grade,y=unlab-snap-roadnet-ca] {\gradednodes};

      \addplot
      table [x=grade,y=unlab-snap-roadnet-pa] {\gradednodes};

      \addplot
      table [x=grade,y=unlab-snap-roadnet-tx] {\gradednodes};

      \addplot
      table [x=grade,y=unlab-snap-soc-pokec] {\gradednodes};
    \end{axis}
  \end{tikzpicture}
  \hskip 4pt 
  \begin{tikzpicture}[baseline]
    \begin{axis}[
        title style={align=center},
        title={$(d)$ $\%$ of edges in $(c,3)$ reduct \\ relative to $3$-reduct},
        xlabel={Grade $c$},
        xmin=1.0, xmax=5.09,
        cycle list name=exotic,
      ]
      \pgfplotstableread{data/compression-graded-edges-rel.tsv}\gradededges
      \addplot
      table [x=grade,y=unlab-ogbn-arxiv]  {\gradededges};

      \addplot
      table [x=grade,y=unlab-ogbn-arxiv-inv]
        {\gradededges};

      \addplot
      table [x=grade,y=unlab-ogbn-arxiv-undirected]
        {\gradededges};

      \addplot
      table [x=grade,y=unlab-ogbn-products] {\gradededges};

      \addplot
      table [x=grade,y=unlab-snap-roadnet-ca] {\gradededges};

      \addplot
      table [x=grade,y=unlab-snap-roadnet-pa] {\gradededges};

      \addplot
      table [x=grade,y=unlab-snap-roadnet-tx] {\gradededges};

      \addplot
      table [x=grade,y=unlab-snap-soc-pokec] {\gradededges};
    \end{axis}
  \end{tikzpicture}

  \begin{center}
    \ref{namedlegend}
  \end{center}
  \vspace{-2ex}
  \caption{Normalized reduction in nodes and edges using $d$-reduction (a  and b) and $(c,3)$-reduction (c and d).  \label{fig:compression}}
  \vspace{2ex}
\end{figure*}
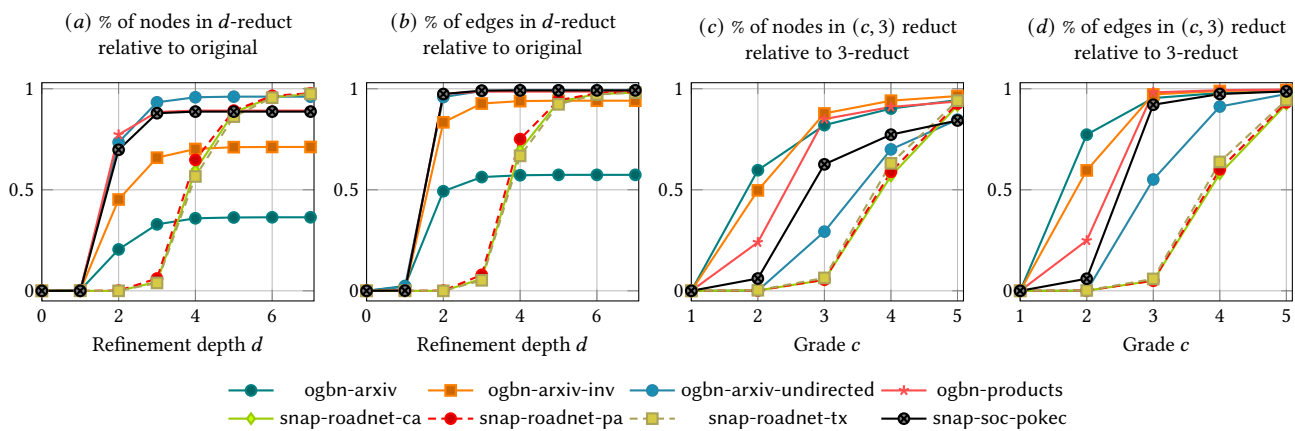

\paragraph*{Ungraded compression}
Figures~\ref{fig:compression}(a) and \ref{fig:compression}(b) shows the fraction of nodes and edges remaining in $d$-reducts of these graphs, plotted as a function of the number of color refinement rounds $d$. (Consistent with our definition of the size of a multigraph, the number of edges plotted is the number of unique edges, i.e., ignoring edge multiplicities.) We see that in terms of the number of nodes, compression is effective for $d \leq 2$, obtaining compression ratios of $0.03\%-0.05\%$ on the road network datasets, to $77\%$ on \products. For $d = 3$, compression becomes ineffective for \arxivun, \products and \socpokec as these retain $88\%$ of their nodes or more. Compression on the other datasets is satsifactory for $d=3$, as shown in Table~\ref{tab:compression-d-3}. From $d > 3$ onwards, node compression becomes ineffective for most datasets, with all datasets except \arxiv retaining at least $86\%$ of their nodes when $d \geq 5$. By contrast, \arxiv stabilizes at $d = 4$, retaining only $36\%$ of its nodes.

In terms of the number of edges, we see that \arxivinv, \arxivun, \products, and \socpokec retain almost $100\%$ of their edges from $d \geq 2$ onwards, while \arxiv retains only $56\%$ of its edges when $d=3$ and the road networks compress to $5\%-8\%$ of the edges when $d = 3$.

While we may hence conclude that in general compression becomes ineffective for deeper layer numbers, $d > 4$, we note that in practice most \gnn topologies use only $d = 3$ layers. For such \gnns, compression on \arxiv and the road networks is promising.

One may wonder why the inverted and undirected variants of \arxiv
differ so much in terms of compression. The short answer is that
that, their graph topology is completely different. As such, the
local neighborhood information that $\colr^d$ calculates is also
completely different, yielding different reductions. For example,
a manual inspection of \arxiv reveals a number of highly cited papers that have no outgoing edges. While these papers are the ``sink nodes'' in \arxiv, they are ``source nodes'' in \arxivinv. Quite quickly we may then distinguish nodes in \arxivinv based solely on the number of highly cited papers that they cite. This behavior does not occur in \arxiv, because the highly cited papers are outgoing neighbors, which $\colr^d$  ignores.

\begin{table}[tpb]
  \small
  \begin{tabular}[h]{@{}lrr@{}}
    \toprule Graph & Nodes ($\%$) & Edges ($\%$) \\
    \midrule
    \csvreader[
    late after line=                             \\,
    late after last line=                        \\\bottomrule,
    ]
    {data/compression-ungraded-depth-3.csv}{}%
    {\csvcoli      & \csvcolii    & \csvcoliii}  %
  \end{tabular}
  \vspace{1ex}
  \caption{Compression at $d=3$.}
  \vspace{-2ex}
  \label{tab:compression-d-3}
\end{table}

\paragraph*{Graded compression} We next evaluate the effect of moving to
compression based on $c$-graded color
refinement. Figure~\ref{fig:compression}(c) and (d) shows the fraction of nodes
and edges remaining in $(c,3)$-reducts of our graphs, relative to the number of nodes and edges in the corresponding $d=3$-reduct, plotted as a
function of $c$. In terms of the number of nodes we see that, as expected, moving to graded compression yields better compression than non-graded compression. In particular, for $c=1$ all datasets retain $< 1\%$ of their nodes while for $c =3$, compression is $27\%$ or less for \arxiv, \arxivun, and the road networks. In the latter setting, \products is at $75\%$ of nodes and \socpokec at $55\%$. Compression hence becomes less effective as $c$ increases.

In terms of the number of edges, compression is most effective when $c = 1$ or $c=2$ in which case it significantly improves over ungraded compression. From $c \geq 3$ onwards, graded compression becomes only marginally better than ungraded compression.

We conclude that graded compression has the potential to yield significantly smaller graphs than ungraded compression, but only for small gradedness values, $c=1$ (i.e., bisimulation) or $c=2$.

\paragraph*{Conclusion}
Overall, we see that real-world graphs have  diverse non-graded compression ratios: road networks compress extremely well (to $4\%$ of nodes and $5\%$ of
  edges when $d = 3$); while other graphs such as \arxivinv compress reasonably in terms of number of nodes ($65\%$) but only marginally in terms of edges ($93\%$ or more); and graphs such as \products compress only marginally in both (retaining $90\%$ or more of nodes and edges).
This diversity is to be expected: our exact compression methodology is based on exploiting redundancy in local neighborhoods of nodes. For graphs where few nodes have equal local neighborhoods, we cannot expect reduction in size. Moving to graded compression for those graphs improves reduction in size, but will yield approximate compression unless the learning problem hypothesis space has small width. 

\subsection{Learning}
\label{sec:learning-evaluation}

\begin{table}[tpb]
  \small
  \begin{tabular}[h]{@{}crrccc@{}}
    \toprule Problem & Nodes                      & Edges                      & Accurracy                  & \multicolumn{2}{c}{Training}               \\
                     & \multicolumn{1}{c}{($\%$)} & \multicolumn{1}{c}{($\%$)} & \multicolumn{1}{c}{($\%$)} & time ($s$)                   & mem (GiB)   \\
    \midrule
    \csvreader[
    late after line=                                                                                                                                     \\,
    late after last line=                                                                                                                                \\\bottomrule,
    ]
    {data/learning-summary.csv}{}%
    {\csvcoli        & \csvcoliii                 & \csvcoliv                  & \csvcolv                   & \csvcolvi                    & \csvcolvii} %
  \end{tabular}
  \vspace{1ex}
  \caption{Comparison of learning on the original uncompressed problem $\prob_1$, the uncompressed problem with discretized labels $\prob_2$, and $(c, d=3)$-compressed variants $\prob_3^c$.   \label{tab:compressed-learning}}
  \vspace{-3ex}

\end{table}

We next turn to validating empirically the effect of learning on compressed
problems according to our methodology. Specifically, we apply our methodology to learning on \arxivinv with $d = 3$.  We know from Section \ref{sec:compression-evaluation} that  compression on this graph is reasonable in terms of nodes, but only marginal in terms of edges. Despite the modest compression, the effect on learning efficiency in terms of learning time and memory consumption is still positive, as we will see.

We hence focus in this section on
\arxivinv and the associated learning problem from the OGBN benchmark: predict,
for every paper in \arxivinv, its subject area (e.g., cs.AI, cs.LG, and cs.OS,
\dots). There are $40$ possible subject areas. In addition to the citation
network, we have available for each paper a $128$-dimensional feature vector
obtained by averaging the embeddings of words in its title and abstract. Our
learning problem $\prob_1$ hence consists of the \arxivinv graph in which each
node is colored with this feature vector. The training set $T$ consists of 90941
nodes, obtained conform the OGBN benchmark. The hypothesis space $\topol$
consists of \gnns that all share the same topology and  vary only in their
concrete parameters. The topology consists of $3$ GNN layers whose $\agg$
function computes the mean of all colors of incoming neighbors. The $\comb$
function applies a linear transformation to $G(v)$, applies a linear transformation to the result of the aggregation, and finally sums these two intermediates together. Each layer, except the final one, is followed by a batch normalisation as well ReLU to introduce non-linearity. The layers have dimensions $(128,256)$, $(256,256)$ and $(256, 40)$, respectively. We apply a $50\%$ dropout between layers during training. 
Softmax is applied after the last layer to turn feature vectors into subject
areas, and we use cross-entropy as loss function.

Unfortunately, the initial coloring in $\prob_1$ is such that every node has a distinct color. Therefore, every node is in its own unique color refinement class, and compression is not possible. We therefore transform $\prob_1$ into a problem $\prob_2$ that can be compressed by first converting the $128$-dimensional word embedding vectors into estimates of paper areas by means of a multilayer perceptron (MLP) that is trained on the nodes in $T$ \emph{but without having the graph structure available}. This hence yields an initial estimate of the  paper area for each node. By learning a \gnn on the graph that is colored by one-hot encodings of these initial estimates, the estimates can be refined based on the graph topology. We denote the new problem hence obtained by $\prob_2$. Note that $\prob_2$ has the same training set, hypothesis space, and loss function as $\prob_1$. The MLP has input dimension $128$, one hidden layer of dimension 256, and output layer of dimension $40$.

We next compress $\prob_2$ using $(c,d)$-reduction with $d=3$. The resulting compressed problems are denoted $\prob_3^c$. The corresponding compressed graph sizes are shown in Table~\ref{tab:compressed-learning}. We note that, because we consider labeled graphs here, the compression ratio is worse than the maximum-compression scenario of Section~\ref{sec:compression-evaluation}.

We gauge the generalisation power of the learned \gnns by computing the accurracy on the test set, determined by the OGBN benchmark, comprising 48603 nodes. We learn for 256 epochs with learning rate $0.01$ on all problems. All experiments are run on an HP ZBook Fury G8 with Intel Core i9 11950H CPU, 32 GB of RAM and NVIDIA RTX A3000 GPU with 6 GB RAM.

The results are summarized in Table~\ref{tab:compressed-learning}. Comparing $\prob_1$ with $\prob_2$ we see that estimating the paper area through an MLP has marginal effect on the test accurracy, training time and memory consumption. Further comparing $\prob_2$ with $\prob_3^c$, which are equivalent by Theorem~\ref{thm:compression}, we see that training accurracy is indeed comparable between $\prob_2$ and $\prob_3^{\infty}$; we attribute the difference in accurracy to the stochastic nature of learning. There is a larger difference in accurracy between $\prob_2$ and $\prob_3^{c}$ when $c = 1$, i.e., when we compress based on bisimulation, than when $c > 1$. This is because $c$-graded compression is an approximation, as explained in Section~\ref{sec:graded}, and because, as we can see, $c$-graded compression for $c > 1$ is nearly identical in size to ungraded compression. For $c > 1$ we may hence expect there to be only marginal differences w.r.t. ungraded compression. Learning the compressed problems  $\prob_3^c$ is more efficient than learning on the uncompressed $\prob_2$, taking only $81.5$--$84.3\%$ of the learning time and $78.7$--$81.6\%$ of the memory respectively which is comparable to the reduction in number of nodes when compressing.

Our evaluation in this section  is on a single learning problem; it should hence be interpreted as a preliminary insight that requires further evaluation. Based on these preliminary findings, however, we conclude that compressed learning
can yield observable improvements in training time and memory consumption.

\section{Conclusion and Future Work}
\label{sec:conclusion}
We have proposed a formal methodology for exact compression of \gnn-based learning problems. While the attainable exact compression ratio depends on the input graph, our experiment in Section~\ref{sec:learning-evaluation} nevertheless indicates that observable improvements in learning efficiency are possible, even when the compression in terms of the number of edges is negligible.

In terms of  future work, first and foremost our
preliminary empirical evaluation should be extended to more learning tasks.
Second, as we have seen $c$-graded color refinement offers a principled way of
approximating exact compression, which becomes exact for hypothesis spaces of width $c$.  It
is an interesting question whether other existing approximate compression
proposals~\cite{dengGraphZoomMultilevelSpectral2020a,liangMILEMultiLevelFramework2021}
can similarly be tied to structural properties of the hypothesis
space. 

\begin{acks}
We thank Floris Geerts for helpful discussions.
S. Vansummeren and J. Steegmans were supported by the Bijzonder Onderzoeksfonds (BOF) of Hasselt University under Grants No. BOF20ZAP02 and BOF21D11VDBJ.  This work was further supported by the Research Foundation Flanders (FWO) under research project Grant No. G019222N. We acknowledge computing resources and services provided by the VSC (Flemish Supercomputer Center), funded by the Research Foundation – Flanders (FWO) and the Flemish Government.
\end{acks}

\bibliographystyle{ACM-Reference-Format}
\bibliography{zotero_gnn}


\begin{thebibliography}{28}


\ifx \showCODEN    \undefined \def \showCODEN     #1{\unskip}     \fi
\ifx \showDOI      \undefined \def \showDOI       #1{#1}\fi
\ifx \showISBNx    \undefined \def \showISBNx     #1{\unskip}     \fi
\ifx \showISBNxiii \undefined \def \showISBNxiii  #1{\unskip}     \fi
\ifx \showISSN     \undefined \def \showISSN      #1{\unskip}     \fi
\ifx \showLCCN     \undefined \def \showLCCN      #1{\unskip}     \fi
\ifx \shownote     \undefined \def \shownote      #1{#1}          \fi
\ifx \showarticletitle \undefined \def \showarticletitle #1{#1}   \fi
\ifx \showURL      \undefined \def \showURL       {\relax}        \fi
\providecommand\bibfield[2]{#2}
\providecommand\bibinfo[2]{#2}
\providecommand\natexlab[1]{#1}
\providecommand\showeprint[2][]{arXiv:#2}

\bibitem[Baier and Katoen(2008)]%
        {baierPrinciplesModelChecking2008}
\bibfield{author}{\bibinfo{person}{Christel Baier} {and}
  \bibinfo{person}{Joost-Pieter Katoen}.} \bibinfo{year}{2008}\natexlab{}.
\newblock \bibinfo{booktitle}{\emph{Principles of Model Checking}}.
\newblock \bibinfo{publisher}{{MIT Press}}.
\newblock
\showISBNx{978-0-262-02649-9}


\bibitem[Barcel{\'o} et~al\mbox{.}(2020)]%
        {barceloExpressivePowerGraph2020}
\bibfield{author}{\bibinfo{person}{Pablo Barcel{\'o}}, \bibinfo{person}{Egor~V.
  Kostylev}, \bibinfo{person}{Mika{\"e}l Monet}, \bibinfo{person}{Jorge
  P{\'e}rez}, \bibinfo{person}{Juan~L. Reutter}, {and}
  \bibinfo{person}{Juan-Pablo Silva}.} \bibinfo{year}{2020}\natexlab{}.
\newblock \showarticletitle{The {{Expressive Power}} of {{Graph Neural
  Networks}} as a {{Query Language}}}.
\newblock \bibinfo{journal}{\emph{ACM SIGMOD Record}} \bibinfo{volume}{49},
  \bibinfo{number}{2} (\bibinfo{date}{Dec.} \bibinfo{year}{2020}),
  \bibinfo{pages}{6--17}.
\newblock
\showISSN{0163-5808}
\urldef\tempurl%
\url{https://doi.org/10.1145/3442322.3442324}
\showDOI{\tempurl}


\bibitem[Battaglia et~al\mbox{.}(2018)]%
        {battagliaRelationalInductiveBiases2018}
\bibfield{author}{\bibinfo{person}{Peter~W. Battaglia},
  \bibinfo{person}{Jessica~B. Hamrick}, \bibinfo{person}{Victor Bapst},
  \bibinfo{person}{Alvaro {Sanchez-Gonzalez}}, \bibinfo{person}{Vinicius
  Zambaldi}, \bibinfo{person}{Mateusz Malinowski}, \bibinfo{person}{Andrea
  Tacchetti}, \bibinfo{person}{David Raposo}, \bibinfo{person}{Adam Santoro},
  \bibinfo{person}{Ryan Faulkner}, \bibinfo{person}{Caglar Gulcehre},
  \bibinfo{person}{Francis Song}, \bibinfo{person}{Andrew Ballard},
  \bibinfo{person}{Justin Gilmer}, \bibinfo{person}{George Dahl},
  \bibinfo{person}{Ashish Vaswani}, \bibinfo{person}{Kelsey Allen},
  \bibinfo{person}{Charles Nash}, \bibinfo{person}{Victoria Langston},
  \bibinfo{person}{Chris Dyer}, \bibinfo{person}{Nicolas Heess},
  \bibinfo{person}{Daan Wierstra}, \bibinfo{person}{Pushmeet Kohli},
  \bibinfo{person}{Matt Botvinick}, \bibinfo{person}{Oriol Vinyals},
  \bibinfo{person}{Yujia Li}, {and} \bibinfo{person}{Razvan Pascanu}.}
  \bibinfo{year}{2018}\natexlab{}.
\newblock \bibinfo{title}{Relational Inductive Biases, Deep Learning, and Graph
  Networks}.
\newblock
\newblock
\urldef\tempurl%
\url{https://doi.org/10.48550/arXiv.1806.01261}
\showDOI{\tempurl}
\showeprint[arxiv]{arXiv:1806.01261}


\bibitem[Cardon and Crochemore(1982)]%
        {cardonPartitioningGraphLog1982}
\bibfield{author}{\bibinfo{person}{A. Cardon} {and} \bibinfo{person}{M.
  Crochemore}.} \bibinfo{year}{1982}\natexlab{}.
\newblock \showarticletitle{Partitioning a Graph in \${{O}}(|{{A}}|
  \textbackslash log\_2 |{{V}}|)\$}.
\newblock \bibinfo{journal}{\emph{Theoretical Computer Science}}
  \bibinfo{volume}{19}, \bibinfo{number}{1} (\bibinfo{date}{July}
  \bibinfo{year}{1982}), \bibinfo{pages}{85--98}.
\newblock
\showISSN{0304-3975}
\urldef\tempurl%
\url{https://doi.org/10.1016/0304-3975(82)90016-0}
\showDOI{\tempurl}


\bibitem[Chen et~al\mbox{.}(2018)]%
        {chenFastGCNFastLearning2018}
\bibfield{author}{\bibinfo{person}{Jie Chen}, \bibinfo{person}{Tengfei Ma},
  {and} \bibinfo{person}{Cao Xiao}.} \bibinfo{year}{2018}\natexlab{}.
\newblock \showarticletitle{{{FastGCN}}: {{Fast Learning}} with {{Graph
  Convolutional Networks}} via {{Importance Sampling}}}. In
  \bibinfo{booktitle}{\emph{6th {{International Conference}} on {{Learning
  Representations}}, {{ICLR}} 2018, {{Vancouver}}, {{BC}}, {{Canada}},
  {{April}} 30 - {{May}} 3, 2018, {{Conference Track Proceedings}}}}.
  \bibinfo{publisher}{{OpenReview.net}}.
\newblock


\bibitem[Deng et~al\mbox{.}(2020)]%
        {dengGraphZoomMultilevelSpectral2020a}
\bibfield{author}{\bibinfo{person}{Chenhui Deng}, \bibinfo{person}{Zhiqiang
  Zhao}, \bibinfo{person}{Yongyu Wang}, \bibinfo{person}{Zhiru Zhang}, {and}
  \bibinfo{person}{Zhuo Feng}.} \bibinfo{year}{2020}\natexlab{}.
\newblock \bibinfo{title}{{{GraphZoom}}: {{A}} Multi-Level Spectral Approach
  for Accurate and Scalable Graph Embedding}.
\newblock
\newblock
\urldef\tempurl%
\url{https://doi.org/10.48550/arXiv.1910.02370}
\showDOI{\tempurl}
\showeprint[arxiv]{arXiv:1910.02370}


\bibitem[Geerts et~al\mbox{.}(2022)]%
        {geertsExpressivePowerMessagePassing2022}
\bibfield{author}{\bibinfo{person}{Floris Geerts}, \bibinfo{person}{Jasper
  Steegmans}, {and} \bibinfo{person}{Jan {Van den Bussche}}.}
  \bibinfo{year}{2022}\natexlab{}.
\newblock \showarticletitle{On the~{{Expressive Power}} of~{{Message-Passing
  Neural Networks}} as~{{Global Feature Map Transformers}}}. In
  \bibinfo{booktitle}{\emph{Foundations of {{Information}} and {{Knowledge
  Systems}}}} \emph{(\bibinfo{series}{Lecture {{Notes}} in {{Computer
  Science}}})}, \bibfield{editor}{\bibinfo{person}{Ivan Varzinczak}} (Ed.).
  \bibinfo{publisher}{{Springer International Publishing}},
  \bibinfo{address}{{Cham}}, \bibinfo{pages}{20--34}.
\newblock
\showISBNx{978-3-031-11321-5}
\urldef\tempurl%
\url{https://doi.org/10.1007/978-3-031-11321-5_2}
\showDOI{\tempurl}


\bibitem[Generale et~al\mbox{.}(2022)]%
        {generaleScalingRGCNTraining2022}
\bibfield{author}{\bibinfo{person}{Alessandro Generale}, \bibinfo{person}{Till
  Blume}, {and} \bibinfo{person}{Michael Cochez}.}
  \bibinfo{year}{2022}\natexlab{}.
\newblock \showarticletitle{Scaling {{R-GCN Training}} with {{Graph
  Summarization}}}. In \bibinfo{booktitle}{\emph{Companion {{Proceedings}} of
  the {{Web Conference}} 2022}} \emph{(\bibinfo{series}{{{WWW}} '22})}.
  \bibinfo{publisher}{{Association for Computing Machinery}},
  \bibinfo{address}{{New York, NY, USA}}, \bibinfo{pages}{1073--1082}.
\newblock
\showISBNx{978-1-4503-9130-6}
\urldef\tempurl%
\url{https://doi.org/10.1145/3487553.3524719}
\showDOI{\tempurl}


\bibitem[Gilmer et~al\mbox{.}(2017)]%
        {gilmerNeuralMessagePassing2017}
\bibfield{author}{\bibinfo{person}{Justin Gilmer}, \bibinfo{person}{Samuel~S.
  Schoenholz}, \bibinfo{person}{Patrick~F. Riley}, \bibinfo{person}{Oriol
  Vinyals}, {and} \bibinfo{person}{George~E. Dahl}.}
  \bibinfo{year}{2017}\natexlab{}.
\newblock \showarticletitle{Neural Message Passing for {{Quantum}} Chemistry}.
  In \bibinfo{booktitle}{\emph{Proceedings of the 34th {{International
  Conference}} on {{Machine Learning}} - {{Volume}} 70}}
  \emph{(\bibinfo{series}{{{ICML}}'17})}. \bibinfo{publisher}{{JMLR.org}},
  \bibinfo{address}{{Sydney, NSW, Australia}}, \bibinfo{pages}{1263--1272}.
\newblock


\bibitem[Grohe(2021)]%
        {groheLogicGraphNeural2021}
\bibfield{author}{\bibinfo{person}{Martin Grohe}.}
  \bibinfo{year}{2021}\natexlab{}.
\newblock \showarticletitle{The Logic of Graph Neural Networks}. In
  \bibinfo{booktitle}{\emph{Proceedings of the 36th {{Annual ACM}}/{{IEEE
  Symposium}} on {{Logic}} in {{Computer Science}}}}
  \emph{(\bibinfo{series}{{{LICS}} '21})}. \bibinfo{publisher}{{Association for
  Computing Machinery}}, \bibinfo{address}{{New York, NY, USA}},
  \bibinfo{pages}{1--17}.
\newblock
\showISBNx{978-1-66544-895-6}
\urldef\tempurl%
\url{https://doi.org/10.1109/LICS52264.2021.9470677}
\showDOI{\tempurl}


\bibitem[Hamilton(2020)]%
        {hamiltonGraphRepresentationLearning2020}
\bibfield{author}{\bibinfo{person}{William~L. Hamilton}.}
  \bibinfo{year}{2020}\natexlab{}.
\newblock \bibinfo{booktitle}{\emph{Graph {{Representation Learning}}}}.
\newblock \bibinfo{publisher}{{Morgan \& Claypool Publishers}}.
\newblock
\urldef\tempurl%
\url{https://doi.org/10.2200/S01045ED1V01Y202009AIM046}
\showDOI{\tempurl}


\bibitem[Hamilton et~al\mbox{.}(2017)]%
        {hamiltonInductiveRepresentationLearning2017}
\bibfield{author}{\bibinfo{person}{William~L. Hamilton}, \bibinfo{person}{Rex
  Ying}, {and} \bibinfo{person}{Jure Leskovec}.}
  \bibinfo{year}{2017}\natexlab{}.
\newblock \showarticletitle{Inductive Representation Learning on Large Graphs}.
  In \bibinfo{booktitle}{\emph{Proceedings of the 31st {{International
  Conference}} on {{Neural Information Processing Systems}}}}
  \emph{(\bibinfo{series}{{{NIPS}}'17})}. \bibinfo{publisher}{{Curran
  Associates Inc.}}, \bibinfo{address}{{Red Hook, NY, USA}},
  \bibinfo{pages}{1025--1035}.
\newblock
\showISBNx{978-1-5108-6096-4}


\bibitem[Hu et~al\mbox{.}(2020)]%
        {huOpenGraphBenchmark2020}
\bibfield{author}{\bibinfo{person}{Weihua Hu}, \bibinfo{person}{Matthias Fey},
  \bibinfo{person}{Marinka Zitnik}, \bibinfo{person}{Yuxiao Dong},
  \bibinfo{person}{Hongyu Ren}, \bibinfo{person}{Bowen Liu},
  \bibinfo{person}{Michele Catasta}, {and} \bibinfo{person}{Jure Leskovec}.}
  \bibinfo{year}{2020}\natexlab{}.
\newblock \showarticletitle{Open {{Graph Benchmark}}: {{Datasets}} for
  {{Machine Learning}} on {{Graphs}}}. In \bibinfo{booktitle}{\emph{Advances in
  {{Neural Information Processing Systems}}}}, Vol.~\bibinfo{volume}{33}.
  \bibinfo{publisher}{{Curran Associates, Inc.}},
  \bibinfo{pages}{22118--22133}.
\newblock


\bibitem[Huang et~al\mbox{.}(2018)]%
        {huangAdaptiveSamplingFast2018}
\bibfield{author}{\bibinfo{person}{Wenbing Huang}, \bibinfo{person}{Tong
  Zhang}, \bibinfo{person}{Yu Rong}, {and} \bibinfo{person}{Junzhou Huang}.}
  \bibinfo{year}{2018}\natexlab{}.
\newblock \showarticletitle{Adaptive Sampling towards Fast Graph Representation
  Learning}. In \bibinfo{booktitle}{\emph{Proceedings of the 32nd
  {{International Conference}} on {{Neural Information Processing Systems}}}}
  \emph{(\bibinfo{series}{{{NIPS}}'18})}. \bibinfo{publisher}{{Curran
  Associates Inc.}}, \bibinfo{address}{{Red Hook, NY, USA}},
  \bibinfo{pages}{4563--4572}.
\newblock


\bibitem[Leskovec and Krevl({[n.\,d.]})]%
        {leskovecSNAPDatasetsStanford}
\bibfield{author}{\bibinfo{person}{Jure Leskovec} {and} \bibinfo{person}{Andrej
  Krevl}.} \bibinfo{year}{[n.\,d.]}\natexlab{}.
\newblock \bibinfo{title}{{{SNAP Datasets}}: {{Stanford Large Network Dataset
  Collection}}}.
\newblock
\newblock


\bibitem[Liang et~al\mbox{.}(2021)]%
        {liangMILEMultiLevelFramework2021}
\bibfield{author}{\bibinfo{person}{Jiongqian Liang}, \bibinfo{person}{Saket
  Gurukar}, {and} \bibinfo{person}{Srinivasan Parthasarathy}.}
  \bibinfo{year}{2021}\natexlab{}.
\newblock \showarticletitle{{{MILE}}: {{A Multi-Level Framework}} for
  {{Scalable Graph Embedding}}}.
\newblock \bibinfo{journal}{\emph{Proceedings of the International AAAI
  Conference on Web and Social Media}}  \bibinfo{volume}{15}
  (\bibinfo{date}{May} \bibinfo{year}{2021}), \bibinfo{pages}{361--372}.
\newblock
\showISSN{2334-0770}
\urldef\tempurl%
\url{https://doi.org/10.1609/icwsm.v15i1.18067}
\showDOI{\tempurl}


\bibitem[Liao et~al\mbox{.}(2022)]%
        {liaoSCARAScalableGraph2022}
\bibfield{author}{\bibinfo{person}{Ningyi Liao}, \bibinfo{person}{Dingheng Mo},
  \bibinfo{person}{Siqiang Luo}, \bibinfo{person}{Xiang Li}, {and}
  \bibinfo{person}{Pengcheng Yin}.} \bibinfo{year}{2022}\natexlab{}.
\newblock \showarticletitle{{{SCARA}}: Scalable Graph Neural Networks with
  Feature-Oriented Optimization}.
\newblock \bibinfo{journal}{\emph{Proceedings of the VLDB Endowment}}
  \bibinfo{volume}{15}, \bibinfo{number}{11} (\bibinfo{date}{July}
  \bibinfo{year}{2022}), \bibinfo{pages}{3240--3248}.
\newblock
\showISSN{2150-8097}
\urldef\tempurl%
\url{https://doi.org/10.14778/3551793.3551866}
\showDOI{\tempurl}


\bibitem[Lin et~al\mbox{.}(2022)]%
        {linCharacterizingUnderstandingDistributed2022}
\bibfield{author}{\bibinfo{person}{Haiyang Lin}, \bibinfo{person}{Mingyu Yan},
  \bibinfo{person}{Xiaocheng Yang}, \bibinfo{person}{Mo Zou},
  \bibinfo{person}{Wenming Li}, \bibinfo{person}{Xiaochun Ye}, {and}
  \bibinfo{person}{Dongrui Fan}.} \bibinfo{year}{2022}\natexlab{}.
\newblock \showarticletitle{Characterizing and {{Understanding Distributed GNN
  Training}} on {{GPUs}}}.
\newblock \bibinfo{journal}{\emph{IEEE Computer Architecture Letters}}
  \bibinfo{volume}{21}, \bibinfo{number}{1} (\bibinfo{date}{Jan.}
  \bibinfo{year}{2022}), \bibinfo{pages}{21--24}.
\newblock
\showISSN{1556-6064}
\urldef\tempurl%
\url{https://doi.org/10.1109/LCA.2022.3168067}
\showDOI{\tempurl}


\bibitem[Morris et~al\mbox{.}(2019)]%
        {morrisWeisfeilerLemanGo2019}
\bibfield{author}{\bibinfo{person}{Christopher Morris}, \bibinfo{person}{Martin
  Ritzert}, \bibinfo{person}{Matthias Fey}, \bibinfo{person}{William~L.
  Hamilton}, \bibinfo{person}{Jan~Eric Lenssen}, \bibinfo{person}{Gaurav
  Rattan}, {and} \bibinfo{person}{Martin Grohe}.}
  \bibinfo{year}{2019}\natexlab{}.
\newblock \showarticletitle{Weisfeiler and {{Leman Go Neural}}: {{Higher-Order
  Graph Neural Networks}}}.
\newblock \bibinfo{journal}{\emph{Proceedings of the AAAI Conference on
  Artificial Intelligence}} \bibinfo{volume}{33}, \bibinfo{number}{01}
  (\bibinfo{date}{July} \bibinfo{year}{2019}), \bibinfo{pages}{4602--4609}.
\newblock
\showISSN{2374-3468}
\urldef\tempurl%
\url{https://doi.org/10.1609/aaai.v33i01.33014602}
\showDOI{\tempurl}


\bibitem[Peng et~al\mbox{.}(2022)]%
        {pengSancusStalenessawareCommunicationavoiding2022}
\bibfield{author}{\bibinfo{person}{Jingshu Peng}, \bibinfo{person}{Zhao Chen},
  \bibinfo{person}{Yingxia Shao}, \bibinfo{person}{Yanyan Shen},
  \bibinfo{person}{Lei Chen}, {and} \bibinfo{person}{Jiannong Cao}.}
  \bibinfo{year}{2022}\natexlab{}.
\newblock \showarticletitle{Sancus: Staleness-Aware Communication-Avoiding
  Full-Graph Decentralized Training in Large-Scale Graph Neural Networks}.
\newblock \bibinfo{journal}{\emph{Proceedings of the VLDB Endowment}}
  \bibinfo{volume}{15}, \bibinfo{number}{9} (\bibinfo{date}{May}
  \bibinfo{year}{2022}), \bibinfo{pages}{1937--1950}.
\newblock
\showISSN{2150-8097}
\urldef\tempurl%
\url{https://doi.org/10.14778/3538598.3538614}
\showDOI{\tempurl}


\bibitem[Salha et~al\mbox{.}(2019)]%
        {salhaDegeneracyFrameworkScalable2019}
\bibfield{author}{\bibinfo{person}{Guillaume Salha}, \bibinfo{person}{Romain
  Hennequin}, \bibinfo{person}{Viet-Anh Tran}, {and} \bibinfo{person}{Michalis
  Vazirgiannis}.} \bibinfo{year}{2019}\natexlab{}.
\newblock \showarticletitle{A {{Degeneracy Framework}} for {{Scalable Graph
  Autoencoders}}}. In \bibinfo{booktitle}{\emph{Proceedings of the
  {{Twenty-Eighth International Joint Conference}} on {{Artificial
  Intelligence}}, {{IJCAI}} 2019, {{Macao}}, {{China}}, {{August}} 10-16,
  2019}}, \bibfield{editor}{\bibinfo{person}{Sarit Kraus}} (Ed.).
  \bibinfo{publisher}{{ijcai.org}}, \bibinfo{pages}{3353--3359}.
\newblock
\urldef\tempurl%
\url{https://doi.org/10.24963/ijcai.2019/465}
\showDOI{\tempurl}


\bibitem[Wang et~al\mbox{.}(2022)]%
        {wangNeutronStarDistributedGNN2022}
\bibfield{author}{\bibinfo{person}{Qiange Wang}, \bibinfo{person}{Yanfeng
  Zhang}, \bibinfo{person}{Hao Wang}, \bibinfo{person}{Chaoyi Chen},
  \bibinfo{person}{Xiaodong Zhang}, {and} \bibinfo{person}{Ge Yu}.}
  \bibinfo{year}{2022}\natexlab{}.
\newblock \showarticletitle{{{NeutronStar}}: {{Distributed GNN Training}} with
  {{Hybrid Dependency Management}}}. In \bibinfo{booktitle}{\emph{Proceedings
  of the 2022 {{International Conference}} on {{Management}} of {{Data}}}}
  \emph{(\bibinfo{series}{{{SIGMOD}} '22})}. \bibinfo{publisher}{{Association
  for Computing Machinery}}, \bibinfo{address}{{New York, NY, USA}},
  \bibinfo{pages}{1301--1315}.
\newblock
\showISBNx{978-1-4503-9249-5}
\urldef\tempurl%
\url{https://doi.org/10.1145/3514221.3526134}
\showDOI{\tempurl}


\bibitem[Wu et~al\mbox{.}(2021)]%
        {wuComprehensiveSurveyGraph2021}
\bibfield{author}{\bibinfo{person}{Zonghan Wu}, \bibinfo{person}{Shirui Pan},
  \bibinfo{person}{Fengwen Chen}, \bibinfo{person}{Guodong Long},
  \bibinfo{person}{Chengqi Zhang}, {and} \bibinfo{person}{Philip~S. Yu}.}
  \bibinfo{year}{2021}\natexlab{}.
\newblock \showarticletitle{A {{Comprehensive Survey}} on {{Graph Neural
  Networks}}}.
\newblock \bibinfo{journal}{\emph{IEEE Transactions on Neural Networks and
  Learning Systems}} \bibinfo{volume}{32}, \bibinfo{number}{1}
  (\bibinfo{date}{Jan.} \bibinfo{year}{2021}), \bibinfo{pages}{4--24}.
\newblock
\showISSN{2162-2388}
\urldef\tempurl%
\url{https://doi.org/10.1109/TNNLS.2020.2978386}
\showDOI{\tempurl}


\bibitem[Xu et~al\mbox{.}(2019)]%
        {xuHowPowerfulAre2019}
\bibfield{author}{\bibinfo{person}{Keyulu Xu}, \bibinfo{person}{Weihua Hu},
  \bibinfo{person}{Jure Leskovec}, {and} \bibinfo{person}{Stefanie Jegelka}.}
  \bibinfo{year}{2019}\natexlab{}.
\newblock \showarticletitle{How {{Powerful}} Are {{Graph Neural Networks}}?}.
  In \bibinfo{booktitle}{\emph{7th {{International Conference}} on {{Learning
  Representations}}, {{ICLR}} 2019, {{New Orleans}}, {{LA}}, {{USA}}, {{May}}
  6-9, 2019}}. \bibinfo{publisher}{{OpenReview.net}}.
\newblock


\bibitem[Ying et~al\mbox{.}(2018)]%
        {yingGraphConvolutionalNeural2018}
\bibfield{author}{\bibinfo{person}{Rex Ying}, \bibinfo{person}{Ruining He},
  \bibinfo{person}{Kaifeng Chen}, \bibinfo{person}{Pong Eksombatchai},
  \bibinfo{person}{William~L. Hamilton}, {and} \bibinfo{person}{Jure
  Leskovec}.} \bibinfo{year}{2018}\natexlab{}.
\newblock \showarticletitle{Graph {{Convolutional Neural Networks}} for
  {{Web-Scale Recommender Systems}}}. In \bibinfo{booktitle}{\emph{Proceedings
  of the 24th {{ACM SIGKDD International Conference}} on {{Knowledge
  Discovery}} \& {{Data Mining}}}} \emph{(\bibinfo{series}{{{KDD}} '18})}.
  \bibinfo{publisher}{{Association for Computing Machinery}},
  \bibinfo{address}{{New York, NY, USA}}, \bibinfo{pages}{974--983}.
\newblock
\showISBNx{978-1-4503-5552-0}
\urldef\tempurl%
\url{https://doi.org/10.1145/3219819.3219890}
\showDOI{\tempurl}


\bibitem[Yuan et~al\mbox{.}(2022)]%
        {yuanDistributedLearningFully2022}
\bibfield{author}{\bibinfo{person}{Binhang Yuan}, \bibinfo{person}{Cameron~R.
  Wolfe}, \bibinfo{person}{Chen Dun}, \bibinfo{person}{Yuxin Tang},
  \bibinfo{person}{Anastasios Kyrillidis}, {and} \bibinfo{person}{Chris
  Jermaine}.} \bibinfo{year}{2022}\natexlab{}.
\newblock \showarticletitle{Distributed Learning of Fully Connected Neural
  Networks Using Independent Subnet Training}.
\newblock \bibinfo{journal}{\emph{Proceedings of the VLDB Endowment}}
  \bibinfo{volume}{15}, \bibinfo{number}{8} (\bibinfo{date}{April}
  \bibinfo{year}{2022}), \bibinfo{pages}{1581--1590}.
\newblock
\showISSN{2150-8097}
\urldef\tempurl%
\url{https://doi.org/10.14778/3529337.3529343}
\showDOI{\tempurl}


\bibitem[Zheng et~al\mbox{.}(2022)]%
        {zhengByteGNNEfficientGraph2022}
\bibfield{author}{\bibinfo{person}{Chenguang Zheng}, \bibinfo{person}{Hongzhi
  Chen}, \bibinfo{person}{Yuxuan Cheng}, \bibinfo{person}{Zhezheng Song},
  \bibinfo{person}{Yifan Wu}, \bibinfo{person}{Changji Li},
  \bibinfo{person}{James Cheng}, \bibinfo{person}{Hao Yang}, {and}
  \bibinfo{person}{Shuai Zhang}.} \bibinfo{year}{2022}\natexlab{}.
\newblock \showarticletitle{{{ByteGNN}}: Efficient Graph Neural Network
  Training at Large Scale}.
\newblock \bibinfo{journal}{\emph{Proceedings of the VLDB Endowment}}
  \bibinfo{volume}{15}, \bibinfo{number}{6} (\bibinfo{date}{Feb.}
  \bibinfo{year}{2022}), \bibinfo{pages}{1228--1242}.
\newblock
\showISSN{2150-8097}
\urldef\tempurl%
\url{https://doi.org/10.14778/3514061.3514069}
\showDOI{\tempurl}


\bibitem[Zhu et~al\mbox{.}(2022)]%
        {zhuSpikingGraphConvolutional2022}
\bibfield{author}{\bibinfo{person}{Zulun Zhu}, \bibinfo{person}{Jiaying Peng},
  \bibinfo{person}{Jintang Li}, \bibinfo{person}{Liang Chen},
  \bibinfo{person}{Qi Yu}, {and} \bibinfo{person}{Siqiang Luo}.}
  \bibinfo{year}{2022}\natexlab{}.
\newblock \showarticletitle{Spiking {{Graph Convolutional Networks}}}. In
  \bibinfo{booktitle}{\emph{Thirty-{{First International Joint Conference}} on
  {{Artificial Intelligence}}}}, Vol.~\bibinfo{volume}{3}.
  \bibinfo{pages}{2434--2440}.
\newblock
\showISSN{1045-0823}
\urldef\tempurl%
\url{https://doi.org/10.24963/ijcai.2022/338}
\showDOI{\tempurl}


\end{thebibliography}

\end{document}